\documentclass{article}

\usepackage{microtype}
\usepackage{graphicx}
\usepackage{subcaption}
\usepackage{booktabs} 
\usepackage{multirow}
\usepackage{hyperref}

\usepackage[preprint]{icml2026}

\usepackage{amsmath}
\usepackage{amssymb}
\usepackage{mathtools}
\usepackage{amsthm}

\usepackage[capitalize,noabbrev]{cleveref}

\theoremstyle{plain}
\newtheorem{theorem}{Theorem}[section]
\newtheorem{proposition}[theorem]{Proposition}

\newtheorem{corollary}[theorem]{Corollary}
\theoremstyle{definition}
\newtheorem{definition}[theorem]{Definition}

\theoremstyle{remark}

\usepackage[textsize=tiny]{todonotes}

\definecolor{winter}{rgb}{0.85,0.08,0.2}
\definecolor{summer}{rgb}{0.95,0.53,0.18}         
\definecolor{spring}{rgb}{0.02,0.93,0.68}
\definecolor{autumn}{rgb}{0.02,0.68,0.9}
\definecolor{grey}{rgb}{0.4,0.4,0.4}

\definecolor{cred}{HTML}{ffb3b3}

\newcommand\red[1]{{\color{red}#1}}

\icmltitlerunning{Diversity in Large Language Models under Supervised Fine-Tuning}

\begin{document}

\twocolumn[
  \icmltitle{Diversity in Large Language Models under Supervised Fine-Tuning}



  \icmlsetsymbol{equal}{*}

    \begin{icmlauthorlist}
    \icmlauthor{Roman Klypa}{equal,gre}
    \icmlauthor{Oleksandr Cherednichenko}{equal,swe}
    \end{icmlauthorlist}

    \icmlaffiliation{gre}{Univ. Grenoble Alpes, CNRS, Grenoble INP, LJK, 38000 Grenoble, France}
    \icmlaffiliation{swe}{Department of Mathematics and Mathematical Statistics, Integrated Science Lab, Umeå University, Sweden}
    
    \icmlcorrespondingauthor{Roman Klypa}{roman.klypa@univ-grenoble-alpes.fr}
    \icmlcorrespondingauthor{Oleksandr Cherednichenko}{oleksandr.cherednichenko@umu.se}

  \icmlkeywords{Machine Learning, ICML}

  \vskip 0.3in
]



\printAffiliationsAndNotice{\icmlEqualContribution}  

\begin{abstract}
Supervised Fine-Tuning (SFT) is essential for aligning Large Language Models (LLMs) with user intent, yet it is believed to suppress generative diversity. Although this reduction is frequently referenced, formal empirical testing of the phenomenon remains limited. The expressiveness of LLMs by itself was addressed by multiple prior methods. Their varying perspectives suggest that deeper investigation could yield further improvements. In this study, we attribute the decline to two primary drivers: the neglect of low-frequency patterns within fine-tuning datasets and the forgetting of preexisting knowledge. Motivated by our theoretical analysis, we develop Tempered Focal (TOFU) loss, a novel objective that addresses both stated challenges simultaneously. Our extensive evaluation confirms at scale that generation breadth narrows after SFT and strengthens the hypothesis explaining this effect. Across multiple models and benchmarks, we demonstrate that TOFU enhances output diversity while preserving high response quality, offering a principled approach to SFT.
\end{abstract}

\section{Introduction}

Autoregressive language models (LMs) \cite{bengio_neural_2000} have demonstrated remarkable progress in modeling natural language. Increased data availability and model capacity have allowed Transformer-based \cite{vaswani_attention_2023} architectures to generate  text that closely resembles human-written content. Modern Large Language Models (LLMs) now serve as powerful generative engines that excel across a broad range of specialized tasks, from document summarization~\cite{brown_language_2020} to complex reasoning~\cite{wei_chain--thought_2023}.

Despite their impressive capabilities, pretrained LLMs often produce responses that do not fully capture user intents. This limitation arises because models may generate tokens that are statistically plausible yet semantically misaligned with the actual query, resulting in irrelevant or unhelpful output. To address this issue, researchers increasingly rely on instruction tuning \cite{wei_finetuned_2022,chung_scaling_2022,raffel_exploring_2023}, also referred to as Supervised Fine-Tuning (SFT) \cite{ouyang_training_2022, bai_training_2022}. This process refines LLMs on curated collections of high-quality prompt–response pairs, enabling better alignment with user expectations and task-specific objectives. This stage plays a crucial role in preparing models for subsequent reinforcement learning (RL) alignment \cite{ziegler_fine-tuning_2020,ouyang_training_2022,rafailov_direct_2023, shao_deepseekmath_2024}. By first grounding the model in high-quality demonstrations, SFT provides a stable initialization that enables RL methods to effectively refine task-specific behaviors, rather than struggling with unstructured or misaligned model outputs.

\begin{figure*}[ht]
\vskip 0.2in
\begin{center}
\centerline{\includegraphics[width=\linewidth]{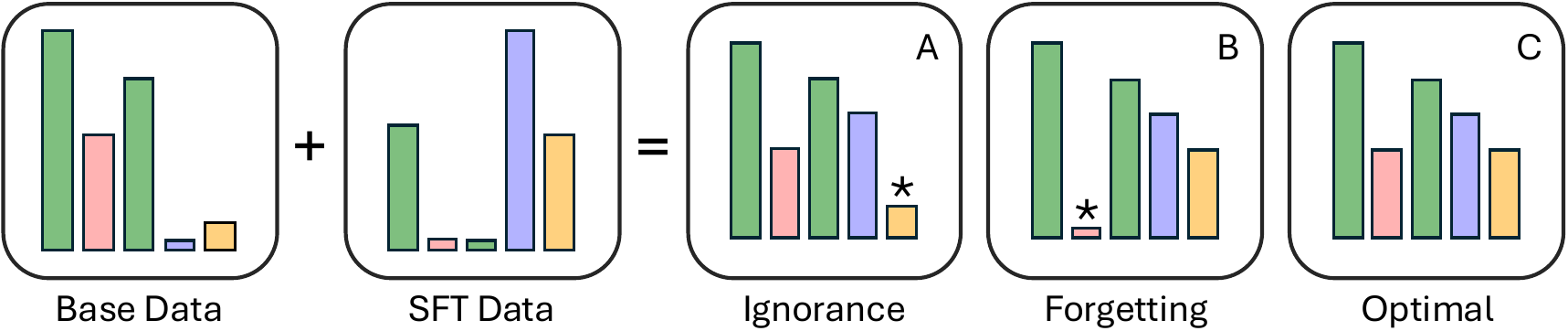}}
\caption{Impact of SFT on Generative Diversity. Comparison of data distributions and model states: Base Data represents the broad pretraining corpus, SFT Data represents the curated instruction set. The bars illustrate the discrete probability distribution over the vocabulary, color denotes token category. (A) \textbf{Ignorance} illustrates the failure to capture low-frequency SFT patterns, while (B) \textbf{Forgetting} depicts the erosion of the original pretraining knowledge. (C) \textbf{Optimal} represents the ideal balance where the model integrates new instructions without sacrificing the generative flexibility or richness of the base distribution.}
\label{fig:assumption}
\end{center}
\vskip -0.2in
\end{figure*}

However, SFT also introduces notable challenges. While pretrained LLMs naturally generate diverse outputs, fine-tuning is considered to reduce this variety, as has been observed for specific model families and evaluation sets \cite{omahony_attributing_2024}. This reduction is problematic, as high diversity offers significant practical advantages: it aids post-training alignment \cite{ouyang_training_2022}, facilitates RL exploration \cite{bai_constitutional_2022}, and enhances complex reasoning \cite{he_rethinking_2022,wang_self-consistency_2023}. Furthermore, diverse generation benefits multi-model systems through complementary generations \cite{brown_managing_2005} and provides users with a wider range of perspectives.

In an effort to preserve distributional breadth, prior work has explored various regularization techniques, including weight decay during fine-tuning \cite{krogh_simple_1991}, noise injection into input embeddings \cite{jain_neftune_2023}, adaptive sampling strategies \cite{troshin_control_2025,nguyen_turning_2025}, model ensembling \cite{hao_understanding_2025}, and alternative loss formulations \cite{li_preserving_2024, verine_improving_2025}. Although these methods offer various improvements, they are inspired by different analytical paradigms and do not fully resolve the underlying factors limiting generative diversity, leaving room for further refinement.

In this work, we attribute the reduction in output diversity during SFT to the interplay of two factors (Figure~\ref{fig:assumption}). First, fine-tuning on small, curated datasets often induces overfitting to dominant patterns \cite{bethune_scaling_2025}, causing the model to neglect the long-tail distribution of the data. Second, the process can erode the diverse knowledge acquired during pretraining \cite{goodfellow_empirical_2015}, further narrowing the model's available response space. The combination of these two phenomena leads to an overall diminished diversity. 

To counter these issues, we focus on loss-based regularization as a principled solution. We investigate existing SFT functions to determine their ability to preserve pretrained knowledge. In addition, we also evaluate Focal Loss (FL) \cite{lin_focal_2017} as a potential SFT objective for maintaining the balanced treatment of rare samples. Building on these insights, we propose a new Tempered Focal (TOFU) loss function explicitly designed to tackle both forgetting and ignorance simultaneously. Our main contributions can be summarized as follows:

\begin{enumerate} 
    \item We conduct an extensive evaluation of existing diversity-oriented SFT objectives across various model families, datasets, and benchmarks. 
    \item We investigate Focal Loss as an SFT objective for maintaining balanced treatment of rare samples and demonstrate its promising performance. 
    \item We propose a novel training objective, TOFU, designed to mitigate both ignorance and forgetting.
    \item We show that our method achieves superior generative diversity while maintaining competitive output quality compared to standard approaches. 
\end{enumerate}

\section{Theoretical Preliminaries and Backgrounds}
\paragraph{Large Language Models}
Large Language Models are trained as next-token predictors over a discrete vocabulary $\mathcal{V}$. Given a sequence of tokens $x_{1:L} = (x_1,...,x_L)$, the model defines a conditional distribution over the next token $x_l$ for each vocabulary element $y \in \mathcal{V}$:
\begin{equation}
    p_\theta(y | x_{<l}) \doteq p_\theta (x_l = y | x_{<l}).
\end{equation}
Training consists of minimizing the Cross-Entropy (CE) between the model’s predicted distribution and a target distribution $q \doteq q(y | x_{<l})$ at each position:
\begin{equation}
\label{def:ce}
    \mathcal{L}_{\mathrm{CE}}(\theta) = - \sum_{l=1}^L \mathbb{E}_q \log p_\theta(y | x_{<l})
\end{equation}
For standard supervised training, $q$ is one-hot on the ground-truth token, making the training equivalent to maximizing the likelihood of the observed sequence. For simplicity, throughout this paper we focus on the loss corresponding to a single token, without explicitly showing its dependence on the context, as this omission is purely notational and does not change the underlying mathematics.

\paragraph{Supervised Fine-Tuning} 
In Supervised Fine-Tuning, LLMs are adapted to specific tasks using sequences that combine a prompt and a response. The model conditions on the prompt as a fixed prefix but is optimized exclusively on the response. This ensures focus on generating the correct outputs for the task, while leveraging existing pretrained knowledge.

The standard training objective for SFT is the Cross-Entropy loss. It has a significant limitation: on relatively small fine-tuning datasets, it encourages the model to focus narrowly on the few observed responses while ignoring other plausible outputs \cite{li_preserving_2024}. Therefore, CE can reduce the expressiveness of the model’s generation, a property that is important for downstream exploration and robust alignment. To address this issue, recent work in the community has explored variants of SFT that employ modified loss functions specifically designed to preserve or enhance output diversity.

\paragraph{GEM by \citet{li_preserving_2024}}
Game-theoretic Entropy Maximization (GEM) reframes SFT as a distribution-matching process in which learning is modeled as transferring probability mass from non-target to target tokens. Instead of relying on Cross-Entropy, which forces indiscriminate and unbounded probability flow, GEM introduces a game-theoretic formulation. In this setup, a meta-controller regulates how and where probability mass moves. This selective control prevents collapse of the output distribution and avoids over-penalizing semantically meaningful or rare tokens. Ultimately, this approach yields a practical training algorithm in which the entire framework reduces to optimizing a new Cross-Entropy replacing loss function. 
\begin{definition}[GEM loss \cite{li_preserving_2024}]
\label{def:gem}
The GEM's objective is defined as follows:
\begin{equation}
\label{eq:gem}
\mathcal{L}_{\mathrm{GEM}}(\theta) = -  \mathbb{E}_{q} [\log p_\theta] + \mathbb{E}_{\red{p^\beta_\theta}} [\log p_\theta],
\end{equation}
where $p^\beta_\theta \doteq \text{softmax}\left(\beta^{-1} \log p_\theta\right)$ is a temperature-scaled distribution with $\beta \in (0,1)$ as a temperature parameter. Note that $\red{p^\beta_\theta}$ is detached from gradients computation. Here and throughout this work we highlight detached gradients in \red{red}.
\end{definition}

Taken together, GEM produces sparse, targeted updates that preserve useful pretraining knowledge while still aligning the model to the supervised dataset. The reduced forgetting results in greater output diversity, as the model maintains a broader and more balanced token distribution.

\paragraph{$\lambda$-PR by \citet{verine_improving_2025}}
$\lambda$-PR is a training objective that explicitly manages the trade-off between quality and diversity, building on previous approaches that improve generation. 

\begin{definition}[$\lambda$-PR loss \cite{verine_improving_2025}]
\label{def:pr}
The $\lambda$-PR's objective is defined as follows:
\begin{equation}
\label{eq:pr}
   \mathcal{L}_{\lambda-\mathrm{PR}}(\theta) = - \mathbb{E}_{q} \left [ w(\lambda, \alpha) \log p_{\theta}  \right ],
\end{equation}
where $w(\lambda, \alpha) = \lambda^{\frac{l-1}{L}} \mathbb{I}_{\red{p_\theta} \leq \delta} \frac{\red{{p_\theta}}}{\alpha+(1-\alpha)\red{p_\theta}}$. Here, $\lambda \in \mathbb{R}^+$ is the main parameter controlling a trade-off, $\alpha \in [0,1]$ and $\delta = \frac{\alpha\lambda^{1/L}}{1-(1-\alpha)\lambda^{1/L}}$ ($L$ being sequence length and $l$ token's position).
\end{definition}

It draws on ideas such as removing high-loss examples to reduce the impact of noisy references \cite{kang_improved_2020}, downweighting unlikely sequences to prevent degeneration \cite{ji_tailoring_2023}, and reweighting gradients to promote high-probability outputs \cite{pang_text_2021}. Although these techniques were originally intended to enhance quality, $\lambda$-PR partially reverses their effects to optimize for diversity. The resulting compound loss function provides a unified framework for fine-tuning the balance between the two competing metrics. $\lambda$-PR can be interpreted as a weighted Cross-Entropy loss, where different tokens contribute unequally to the overall objective. The loss is motivated by two complementary goals: removing low log-loss examples to encourage a broader range of outputs, and downweighting less probable tokens to preserve accuracy.

\paragraph{Focal Loss by \citet{lin_focal_2017}}
Focal Loss, previously widely used in computer vision for unbalanced training \citep{lin_focal_2017}, has been recently explored for enhancing the quality of large language models \cite{rege_cambrin_beyond_2024,xia_influences_2025}, yet its potential remains relatively overlooked.  It addresses class imbalance by reshaping the standard Cross-Entropy loss to downweight well-classified examples.

\begin{definition}[Focal Loss \cite{lin_focal_2017}]
\label{def:focal}
The Focal Loss is defined as follows:
\begin{equation}
   \mathcal{L}_{\mathrm{FL}}(\theta) = - \mathbb{E}_{q} \left [ (1-p_{\theta})^\gamma \log p_{\theta}  \right ],
\end{equation}
where power coefficient $\gamma \geq 0$ is typically selected from range $[1, 5]$ \cite{lin_focal_2017, mukhoti_calibrating_2020, charoenphakdee_focal_2020}.
\end{definition}

We hypothesize that this loss can enhance the model's diversity by placing greater emphasis on underrepresented examples in the SFT dataset.

\section{Proposed Method}

Forgetting and ignorance arise from different aspects of the training dynamics. We propose to address both by combining the forgetting mitigation of GEM with the weighting of Focal Loss for underrepresented samples.
However, combining the objectives is not straightforward, as adding terms or factors can significantly alter the update structure, leading to unstable optimization or poor interpretability. This motivates a careful analysis at the gradient level.

\begin{theorem}[GEM loss equivalence]
    \label{th:gem}
    Training with $\mathcal{L}_{\mathrm{GEM}}$ is equivalent to training with temperature-scaled Cross-Entropy loss, as 
    \begin{equation}
        \nabla_\theta \mathcal{L}_{\mathrm{GEM}}(\theta) = \nabla_\theta \mathcal{L}^\beta_{\mathrm{CE}}(\theta),
    \end{equation}
    where $\mathcal{L}^\beta_{\mathrm{CE}}(\theta) \doteq -\beta \mathbb{E}_{q} \log p^\beta_\theta$.
\end{theorem}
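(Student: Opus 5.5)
Both gradients reduce to the same expression in the logits, and the plan is to compute each side explicitly and compare. Write the model output at a single position via logits $z \in \mathbb{R}^{|\mathcal{V}|}$ with $p_\theta = \mathrm{softmax}(z)$, so that $\log p_\theta(y) = z_y - \log Z$ with $Z = \sum_{y'} e^{z_{y'}}$. Every $\theta$-dependence factors through $z$, so by the chain rule it suffices to prove $\nabla_z \mathcal{L}_{\mathrm{GEM}} = \nabla_z \mathcal{L}^\beta_{\mathrm{CE}}$.

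\emph{Left-hand side.} The first term $-\mathbb{E}_q \log p_\theta$ has the standard gradient $p_\theta - q$. For the second term, $p^\beta_\theta$ is detached, so it acts as a fixed weight vector $w$ summing to one. Then $\nabla_z \sum_y w_y \log p_\theta(y) = w - p_\theta$, because $\nabla_z \log p_\theta(y) = e_y - p_\theta$. Evaluating at $w = p^\beta_\theta$ gives
\[
\nabla_z \mathcal{L}_{\mathrm{GEM}} = (p_\theta - q) + (p^\beta_\theta - p_\theta) = p^\beta_\theta - q.
\]

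\emph{Right-hand side.} First note that $\beta^{-1}\log p_\theta = \beta^{-1} z - \beta^{-1}\log Z$, and the shift is constant across the vocabulary. Since softmax is shift-invariant, $p^\beta_\theta = \mathrm{softmax}(z/\beta)$. Now $\mathcal{L}^\beta_{\mathrm{CE}} = -\beta\,\mathbb{E}_q \log \mathrm{softmax}(z/\beta)$, and here $p^\beta_\theta$ is \emph{not} detached. Let $u = z/\beta$. The cross-entropy gradient in $u$ is $\mathrm{softmax}(u) - q$, and $\partial u/\partial z = \beta^{-1} I$. Therefore
\[
\nabla_z \mathcal{L}^\beta_{\mathrm{CE}} = \beta \cdot \beta^{-1}\,(p^\beta_\theta - q) = p^\beta_\theta - q,
\]
which matches the left-hand side.

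\emph{Main obstacle.} The only delicate point is bookkeeping. One must treat the red detached factor as a constant in the GEM gradient but differentiate through $p^\beta_\theta$ in $\mathcal{L}^\beta_{\mathrm{CE}}$. One must also justify rewriting $p^\beta_\theta$ as $\mathrm{softmax}(z/\beta)$ via shift-invariance; this is exactly what makes the prefactor $\beta$ cancel. Finally, sum over positions and apply the chain rule through the network to conclude the equality of $\nabla_\theta$.
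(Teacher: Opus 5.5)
Your proposal is correct and follows the paper's own proof. Both compute the logit gradient of each side as $p^\beta_\theta - q$: for GEM by treating the detached $p^\beta_\theta$ as fixed weights, and for $\mathcal{L}^\beta_{\mathrm{CE}}$ by letting the prefactor $\beta$ cancel the $\beta^{-1}$ in $\partial l^\beta/\partial z$; the result then transfers to $\nabla_\theta$ by the chain rule through $z$. The paper only uses $p^\beta_\theta = \mathrm{softmax}(z/\beta)$ implicitly in its logit-gradient proposition, so your shift-invariance argument makes explicit a step it leaves unstated.
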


By examining GEM's gradients, we gain both theoretical insight and practical benefits. Specifically, our analysis reveals that GEM is equivalent to a temperature-scaled Cross-Entropy loss (Theorem \ref{th:gem}, full proof is given in Appendix~\ref{th_app:gem}). The equivalence clarifies GEM’s advantage over methods that explicitly relax the predictive distribution by adding an entropy term. Instead of uniform flattening, GEM exerts softer pressure via adaptation, preventing the model from becoming overly confident while still guiding it toward the target. For practitioners, this equivalence also simplifies the computation required to implement GEM. 

Theoretically, the GEM objective shares a global minimum with Cross-Entropy when the latter is scaled by $\beta^{-1}$ after fine-tuning. In practice, however, the model does not converge to that point due to the limited updates in SFT and inherent optimization noise. This allows GEM to reach a parameter solution distinct from that of standard post-hoc temperature scaling \cite{ficler_controlling_2017}.

We next analyze Focal Loss in a similar manner. Prior work \cite{mukhoti_calibrating_2020} has shown that despite not being a proper loss it can be interpreted as a gradient-scaled Cross-Entropy. For completeness, we restate this result and explicitly state the assumptions required for the equivalence in Proposition~\ref{pr:focal} (the full proof is given in Appendix~\ref{pr_app:focal}).

\begin{proposition}[Focal Loss equivalence for one-hot targets]
    \label{pr:focal}
    Assume that the target distribution $q$ is one-hot. Then, for Focal Loss $\mathcal{L}_{\mathrm{FL}}$ and Cross-Entropy $\mathcal{L}_{\mathrm{CE}}$, the gradients satisfy
    \begin{equation}
        \nabla_\theta \mathcal{L}_{\mathrm{FL}}(\theta) = g(\hat{p}_\theta,\gamma) \nabla_\theta \mathcal{L}_{\mathrm{CE}}(\theta),
    \end{equation}
    where $g(p, \gamma) = (1-p)^\gamma - \gamma p (1-p)^{\gamma-1} \log p$. Here and throughout this work $\hat{p}$ denotes the predicted probability assigned to the ground-truth token.
\end{proposition}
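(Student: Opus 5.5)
With $q$ one-hot on the ground-truth token $y^\ast$, the expectation $\mathbb{E}_q$ collapses to evaluation at $y^\ast$. Writing $\hat{p}_\theta = p_\theta(y^\ast)$, both objectives become scalar functions of $\hat{p}_\theta$:
\[
\mathcal{L}_{\mathrm{CE}}(\theta) = -\log \hat{p}_\theta, \qquad
\mathcal{L}_{\mathrm{FL}}(\theta) = -(1-\hat{p}_\theta)^\gamma \log \hat{p}_\theta .
\]
The plan is to apply the chain rule through $\hat{p}_\theta$. Both losses depend on $\theta$ only through the single scalar $\hat{p}_\theta$, so each gradient is a scalar derivative times $\nabla_\theta \hat{p}_\theta$.

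The chain rule gives $\nabla_\theta \mathcal{L}_{\mathrm{CE}} = -\hat{p}_\theta^{-1}\nabla_\theta \hat{p}_\theta$. For the focal loss, let $f(p) = -(1-p)^\gamma \log p$. Differentiating,
\[
f'(p) = \gamma(1-p)^{\gamma-1}\log p - \frac{(1-p)^\gamma}{p}.
\]
We then factor out $-1/p$:
\[
f'(p) = -\frac{1}{p}\Bigl[(1-p)^\gamma - \gamma p(1-p)^{\gamma-1}\log p\Bigr] = -\frac{1}{p}\, g(p,\gamma).
\]
Hence
\[
\nabla_\theta \mathcal{L}_{\mathrm{FL}} = f'(\hat{p}_\theta)\nabla_\theta \hat{p}_\theta = g(\hat{p}_\theta,\gamma)\bigl(-\hat{p}_\theta^{-1}\nabla_\theta\hat{p}_\theta\bigr) = g(\hat{p}_\theta,\gamma)\nabla_\theta\mathcal{L}_{\mathrm{CE}},
\]
which is the claimed identity. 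The identity holds per token, and it extends to the summed sequence loss term by term, with each position's own $\hat{p}$. That is the sense in which the statement suppresses the context.

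The main point needing care is explaining why the one-hot assumption is essential. For a general $q$, $\mathcal{L}_{\mathrm{FL}}$ is a $q$-weighted sum over many tokens, each carrying a different factor $g(p_\theta(y),\gamma)$. There is then no single scalar multiplying $\nabla_\theta \mathcal{L}_{\mathrm{CE}}$, so the factorization fails.

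Two technical points also need checking. First, the term $(1-p)^{\gamma-1}$ can be singular at $p=1$ when $\gamma<1$. This is harmless, because softmax outputs satisfy $\hat{p}_\theta\in(0,1)$ strictly. Second, $g$ is a scalar and is not detached. The identity is an exact equality of gradients, not the stop-gradient reweighting used in the paper's red notation, so no gradient-blocking convention is needed here.
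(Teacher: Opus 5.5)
Your proof is correct, but it takes a more direct route than the paper's.

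The paper works at the logit level. For arbitrary $q$ it differentiates $\mathcal{L}_{\mathrm{FL}}$ with respect to the log-probabilities $l_i$. It then pushes this through the softmax Jacobian $\partial l_i/\partial z_j = \delta_{ij} - p_j$, which, rewritten with $g$, gives $\partial \mathcal{L}_{\mathrm{FL}}/\partial z_j = -q_j\, g(p_j,\gamma) + p_j \sum_i q_i\, g(p_i,\gamma)$. Only then does it specialize to one-hot $q$, getting $g(p_k,\gamma)(p_j - q_j)$. It matches this against the Cross-Entropy logit gradient $p_j - q_j$ and transfers the identity to $\theta$ because $\nabla_\theta z$ does not depend on the loss.

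You specialize to one-hot $q$ first, so both losses become functions of the single scalar $\hat{p}_\theta$. Everything then reduces to the one-variable identity $f'(p) = -g(p,\gamma)/p$, and your derivative computation and factoring are right. Your version is shorter and slightly more general: it never uses the softmax structure, so it holds for any differentiable parametrization with $\hat{p}_\theta \in (0,1)$.

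The paper's version buys two things yours does not. First, it gives an explicit gradient for non-one-hot $q$, which makes the failure of proportionality concrete; your remark on that point is only heuristic, though the statement does not need it. Second, it gives a logit-level form directly comparable to the GEM and temperature-scaled Cross-Entropy gradient $p^\beta_j - q_j$. The paper reuses that form for the tempered-focal corollary and for the TOFU construction.
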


The scaling function $g(p, \gamma)$ increases the weight corresponding to the underrepresented samples (Figure \ref{fig:coeffs}). By peaking at a small but non-zero $p$, the gradient magnitude prioritizes moderately difficult tokens relative to extremely hard ones. This behavior echoes the idea of downweighting based on predicted probability usually employed for quality enhancement \cite{kang_improved_2020,pang_text_2021,ji_tailoring_2023}.

Comparing FL scaling with that of the $\lambda$-PR (Figure \ref{fig:coeffs}), one might notice the latter's downside: it cannot account for a previously zeroed-out probability $p$ without assigning it the highest possible weight. Consequently, the $\gamma$ parameter of FL provides more flexibility in steering the weighting balance than the $\delta$ of $\lambda$-PR.

\begin{figure}[t]
\vskip 0.2in
\begin{center}
\centerline{\includegraphics[width=\linewidth]{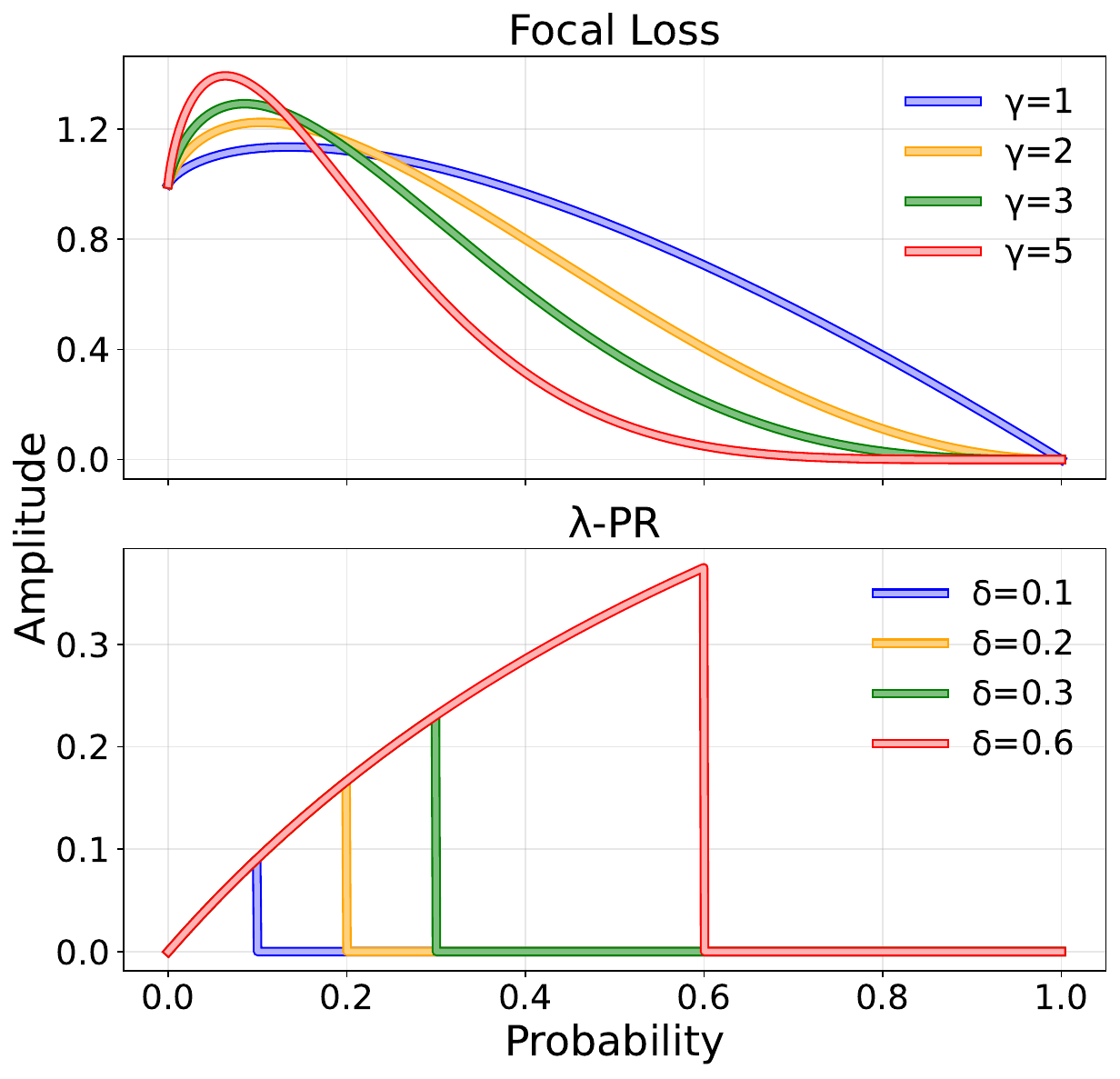}}
\caption{Gradient scaling amplitudes for Focal Loss and $\lambda$-PR as functions of various parameters and probabilities $p$. For $\lambda$-PR we omit the dependence on tokens position for simplicity. 
}
\label{fig:coeffs}
\end{center}
\vskip -0.2in
\end{figure}

\begin{corollary}
    \label{cor:focalb}
    If $q$ is one-hot, then weighting $\mathcal{L}^\beta_{\mathrm{CE}}(\theta)$ by the focal term $(1-p_\theta^\beta)^\gamma$ results in a gradient of the form $g(\hat{p}_\theta^\beta, \gamma) \nabla_\theta \mathcal{L}^\beta_\mathrm{CE}(\theta)$.
\end{corollary}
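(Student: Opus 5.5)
The plan is to reduce the corollary to Proposition~\ref{pr:focal} by substituting the tempered distribution $p^\beta_\theta$ for $p_\theta$. Since $q$ is one-hot on the ground-truth token, the weighted objective is
\[
\mathcal{L}(\theta) = -\beta (1-\hat p^\beta_\theta)^\gamma \log \hat p^\beta_\theta ,
\qquad
\mathcal{L}^\beta_{\mathrm{CE}}(\theta) = -\beta \log \hat p^\beta_\theta .
\]
Here $\hat p^\beta_\theta$ is the tempered probability of the target token. Both depend on $\theta$ only through the scalar $u \doteq \hat p^\beta_\theta$, and $\hat p^\beta_\theta = \mathrm{softmax}(\beta^{-1}\log p_\theta)$ is itself a smooth function of the logits. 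So the whole argument is a one-variable chain rule, exactly as in the proof of Proposition~\ref{pr:focal}.

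First I compute $\nabla_\theta \mathcal{L}^\beta_{\mathrm{CE}} = -\beta\, u^{-1}\nabla_\theta u$. Next I differentiate $f(u) = -\beta(1-u)^\gamma \log u$ to get
\[
f'(u) = -\beta\left[(1-u)^\gamma u^{-1} - \gamma (1-u)^{\gamma-1}\log u\right].
\]
Factoring out $-\beta u^{-1}$ gives
\[
f'(u) = -\beta u^{-1}\left[(1-u)^\gamma - \gamma u (1-u)^{\gamma-1}\log u\right] = -\beta u^{-1} g(u,\gamma).
\]
Hence $\nabla_\theta \mathcal{L} = f'(u)\nabla_\theta u = g(u,\gamma)\,\nabla_\theta \mathcal{L}^\beta_{\mathrm{CE}}$, which is the claim with $u=\hat p^\beta_\theta$.

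The only point needing care is that the focal factor must not be detached: the term $-\gamma u(1-u)^{\gamma-1}\log u$ comes precisely from differentiating $(1-u)^\gamma$. I also need the one-hot assumption, so that the expectation over $q$ collapses to a single scalar function of $u$; without it the focal weights would differ across the support of $q$ and no common scalar factor would exist. Finally, the factor $\beta$ is harmless, since it multiplies both losses identically and cancels in the ratio.

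I do not expect a real obstacle. I will note that the structure of the softmax never enters, only the chain rule through $u$. Theorem~\ref{th:gem} is not needed for the identity itself; it only explains why $\mathcal{L}^\beta_{\mathrm{CE}}$ is the natural GEM surrogate to weight.
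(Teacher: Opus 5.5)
Your proof is correct, but it factors the gradient differently from the paper.

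The paper works in logit space. Using the Jacobian $\partial l^\beta_i/\partial z_j=\beta^{-1}(\delta_{ij}-p^\beta_j)$ from Proposition~\ref{pr:logit}, it shows that for one-hot $q$ the logit gradient of $\sum_i q_i(1-p^\beta_i)^\gamma\log p^\beta_i$ is $\beta^{-1}g(\hat p^\beta_\theta,\gamma)(q_j-p^\beta_j)$. It compares this with $\partial\mathcal{L}^\beta_{\mathrm{CE}}/\partial z_j=p^\beta_j-q_j$ and then pulls back through $\nabla_\theta z$.

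You instead route everything through the single scalar $u=\hat p^\beta_\theta$. The only computation is then $f'(u)=-\beta u^{-1}g(u,\gamma)$, set against $-\beta u^{-1}$ for the tempered CE. The underlying one-variable derivative is the same: in the paper it appears as $\tfrac{d}{dl}[(1-e^{l})^\gamma l]=g(e^{l},\gamma)$ in the log-probability variable. The difference is that you never need the softmax Jacobian.

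Your version is shorter and more general, since it holds for any differentiable map $\theta\mapsto\hat p^\beta_\theta\in(0,1)$, not only a tempered softmax. The paper's version also yields the explicit logit-space update $g(\hat p^\beta_\theta,\gamma)(p^\beta_j-q_j)$. Read next to Theorem~\ref{th:gem}, this shows the non-detached focal weighting as a rescaling of the GEM logit gradient by a factor evaluated at the tempered probability. That is exactly the observation that motivates detaching $g(\hat p_\theta,\gamma)$ in TOFU.
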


The equivalences established by Theorem \ref{th:gem} and Proposition \ref{pr:focal} allow us to merge the core ideas of GEM and Focal Loss by applying the focal term to the temperature-scaled Cross-Entropy. However, as follows from Corollary \ref{cor:focalb} (the complete proof is given in Appendix~\ref{cor_app:focalb}), a naive multiplication of $\mathcal{L}^\beta_{\mathrm{CE}}(\theta)$ by the focal term can excessively amplify the influence of very small probabilities in gradient weighting. This occurs because temperature-scaling reduces these probabilities further, which in turn increases the corresponding focal weight, potentially leading to unintended overemphasis on hard examples. Ideally, one would like to scale the GEM gradients using the focal scaling function $g(p,\gamma)$ evaluated on the unscaled probabilities $p$. Simply multiplying the temperature-scaled Cross-Entropy loss by a focal term that depends on $p$ does not achieve this, because the gradients would propagate differently through $p$ and $p^\beta$. 
The desired effect can be attained by using a detached version of $g(p,\gamma)$, which ensures that the gradient is scaled correctly.
Therefore, in this work, we present Tempered Focal (TOFU) training objective, which naturally combines the advantages of Focal Loss and GEM.  

\begin{definition}[TOFU loss]
\label{def:tony}
Let us denote $p^\beta$ as a temperature-scaled distribution (\ref{def:gem}), then
\begin{equation}
\mathcal{L}_{\mathrm{TOFU}}(\theta) = - \mathbb{E}_{q} \left [ \red{g(\hat{p}_\theta,\gamma)} \beta \log p^\beta_{\theta}  \right ],
\end{equation}
where $g(\hat{p}_\theta,\gamma)$ is detached from gradient computation.
\end{definition}

Due to the inclusion of the focal term, TOFU does not directly optimize a well-defined statistic (unlike Cross-Entropy, which minimizes the Kullback–Leibler divergence). 
\par
\begin{corollary}
    \label{cor:tony}
    If the target distribution $q$ is one-hot, TOFU loss gradient is proportional to the one of the temperature-scaled CE: 
    \begin{gather}
        \nabla_\theta \mathcal{L}_{\mathrm{TOFU}}(\theta) = g(\hat{p}_\theta,\gamma)\nabla_\theta \mathcal{L}^\beta_{\mathrm{CE}}(\theta)
    \end{gather}
\end{corollary}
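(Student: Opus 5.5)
The plan is to exploit the fact that the focal factor in $\mathcal{L}_{\mathrm{TOFU}}$ is detached, so it behaves as a constant under differentiation. For a one-hot target $q=\delta_{y^\ast}$, the expectation $\mathbb{E}_q$ collapses to evaluation at the ground-truth token $y^\ast$. The loss for a single token is therefore
\begin{equation*}
\mathcal{L}_{\mathrm{TOFU}}(\theta) = -\,\red{g(\hat{p}_\theta,\gamma)}\,\beta \log p^\beta_\theta(y^\ast),
\end{equation*}
where $\hat{p}_\theta = p_\theta(y^\ast)$. Similarly, $\mathcal{L}^\beta_{\mathrm{CE}}(\theta) = -\beta \log p^\beta_\theta(y^\ast)$.

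First, I make precise the meaning of ``detached'': $\nabla_\theta$ acts only on the non-red factors, with the red quantity treated as a constant. This is the same convention used in Definition~\ref{def:gem} and in the proof of Theorem~\ref{th:gem}. Under this convention the product rule yields only one term, giving
\begin{equation*}
\nabla_\theta \mathcal{L}_{\mathrm{TOFU}}(\theta) = g(\hat{p}_\theta,\gamma)\,\nabla_\theta\bigl(-\beta \log p^\beta_\theta(y^\ast)\bigr) = g(\hat{p}_\theta,\gamma)\,\nabla_\theta \mathcal{L}^\beta_{\mathrm{CE}}(\theta).
\end{equation*}
This is the claim. I will also remark on the general case. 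For non-one-hot $q$ the factor $g$ differs across tokens $y$, so it cannot be pulled out of $\mathbb{E}_q$, and the gradient is only a $q$-weighted mixture of per-token scalings. This is why the one-hot assumption is needed.

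For context, I will also relate the result to Theorem~\ref{th:gem}, which shows that $\nabla_\theta\mathcal{L}^\beta_{\mathrm{CE}} = \nabla_\theta \mathcal{L}_{\mathrm{GEM}}$. Consequently $\nabla_\theta\mathcal{L}_{\mathrm{TOFU}} = g(\hat{p}_\theta,\gamma)\nabla_\theta\mathcal{L}_{\mathrm{GEM}}$, so TOFU is the GEM gradient rescaled by the Focal Loss weight of Proposition~\ref{pr:focal}. Importantly, that weight is evaluated at the unscaled probability $\hat{p}_\theta$ rather than at $\hat{p}^\beta_\theta$ as in Corollary~\ref{cor:focalb}.

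The main obstacle is not computational but definitional. One must state clearly that the detached $g$ is treated as constant, while $\hat{p}_\theta$ inside $g$ still depends on $\theta$ numerically. Without detachment, the product rule would add a second term $-\beta\log p^\beta_\theta(y^\ast)\,\nabla_\theta g(\hat{p}_\theta,\gamma)$. That term would not be proportional to $\nabla_\theta\mathcal{L}^\beta_{\mathrm{CE}}$, because $\nabla\hat{p}_\theta$ and $\nabla\log p^\beta_\theta(y^\ast)$ involve differently tempered softmax Jacobians. Noting this also explains why the detachment in Definition~\ref{def:tony} is essential.
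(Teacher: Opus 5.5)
Your proposal is correct and follows the paper's proof: both rely only on $g(\hat{p}_\theta,\gamma)$ being detached, so that under the one-hot target it factors out of $\nabla_\theta$ and leaves $g(\hat{p}_\theta,\gamma)\nabla_\theta \mathcal{L}^\beta_{\mathrm{CE}}(\theta)$. Your added remarks are accurate but not part of the paper's one-line argument: you spell out that $\mathbb{E}_q$ collapses to the ground-truth token, and you note that without detachment an extra term appears that is not proportional to $\nabla_\theta \mathcal{L}^\beta_{\mathrm{CE}}$.
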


Nevertheless, it can be interpreted as a form of adaptive gradient scaling of the temperature-scaled Cross-Entropy loss, emphasizing under-predicted tokens while preventing the learned distribution from becoming overly concentrated (Corollary~\ref{cor:tony}, proof is given in Appendix~\ref{cor_app:tony}). As TOFU interpretability is restricted to one-hot target distributions, its applicability to specific SFT distillation remains tenuous, a limitation that similarly extends to the use of Focal Loss.

\section{Experimental Evaluation}

\subsection{Tested LLMs}

To properly evaluate our method and its alternatives, we aim for an experimental setup that is both realistic and diverse in terms of the models tested. In practice, widely used families such as Llama-3 \cite{grattafiori_llama_2024}, Phi-4 \cite{abdin_phi-4_2024}, and Qwen-3 \cite{yang_qwen3_2025} suggest an underlying post-training process that results in security guardrails, though their exact development pipelines remain largely opaque. The potential of alignment to obscure the isolated effects of SFT necessitates a more careful evaluation design. Therefore, in addition to Llama-3.1-8B, Qwen-3-8B, and Phi-4-14B, we focus on the models that have completed only the pretraining stage: OLMo-2-13B \cite{groeneveld_olmo_2024}, Mistral-12B \cite{mistral_ai_mistral_2024}, and Pythia-12B \cite{biderman_pythia_2023}. Although the resulting selection is expected to perform well in simple settings, complex reasoning requires specific training. To meet this requirement, we have added to our setup Qwen-2.5-Math-1.5B, Qwen-2.5-Math-7B \cite{yang_qwen25-math_2024} and DeepSeek-Math-7B \cite{shao_deepseekmath_2024}. As such, we are able to evaluate the SFT objectives both in isolation and in combination with other post-training procedures on a wide range of tasks.

\begin{table*}[t]
\caption{Performance of models across Alpaca SFT objectives on Short Stories and Small Prompts. Diversity (D) is measured via Self-BLEU (0–100), where lower scores are better. Quality (Q) is measured via LLM Judge score (0–5), where higher scores are better.} 
\setlength{\tabcolsep}{1.1pt}
\label{tab:alpacasssp}
\begin{center}
\begin{small}
\begin{sc}
\begin{tabular}{clcccccccccccc}
\toprule
Bench & Method & \multicolumn{2}{c}{Mistral-12B} & \multicolumn{2}{c}{OLMo-2-13B} & \multicolumn{2}{c}{Pythia-12B} & \multicolumn{2}{c}{Llama-3.1-8B} & \multicolumn{2}{c}{Qwen-3-8B} & \multicolumn{2}{c}{Phi-4-14B} \\

& & D$\downarrow$ & Q$\uparrow$ & D$\downarrow$ & Q$\uparrow$ & D$\downarrow$ & Q$\uparrow$ & D$\downarrow$ & Q$\uparrow$ & D$\downarrow$ & Q$\uparrow$ & D$\downarrow$ & Q$\uparrow$ \\

\midrule

\multirow{6}{*}[-2.5ex]{SS} & Base & 11.4$_{\pm 4.7}$ & 3.9$_{\pm 0.5}$ & 12.3$_{\pm 4.6}$ & 3.7$_{\pm 0.7}$ & 9.3$_{\pm 3.1}$ & 2.9$_{\pm 0.6}$ & 11.5$_{\pm 6.1}$ & 3.5$_{\pm 0.7}$ & 24.9$_{\pm 8.9}$ & 2.9$_{\pm 1.0}$ & 13.8$_{\pm 6.8}$ & 3.4$_{\pm 0.9}$ \\[1ex]

& CE & 22.5$_{\pm 8.3}$ & 4.8$_{\pm 0.2}$ & 24.5$_{\pm 9.1}$ & 4.8$_{\pm 0.2}$ & 23.8$_{\pm 8.4}$ & 3.7$_{\pm 0.6}$ & 22.6$_{\pm 8.5}$ & 4.7$_{\pm 0.2}$ & 22.3$_{\pm 7.7}$ & 4.6$_{\pm 0.3}$ & 24.8$_{\pm 9.0}$ & 4.8$_{\pm 0.2}$ \\

\cmidrule(){2-14}

& \textcolor{grey}{$\lambda$-PR} & \textcolor{grey}{3.7$_{\pm 0.7}$} & \textcolor{grey}{2.5$_{\pm 0.5}$} & \textcolor{grey}{3.9$_{\pm 0.6}$} & \textcolor{grey}{3.0$_{\pm 0.4}$} & \textcolor{grey}{4.1$_{\pm 0.7}$} & \textcolor{grey}{2.1$_{\pm 0.5}$} & \textcolor{grey}{3.8$_{\pm 0.7}$} & \textcolor{grey}{2.7$_{\pm 0.5}$} & \textcolor{grey}{5.0$_{\pm 1.2}$} & \textcolor{grey}{3.4$_{\pm 0.4}$} & \textcolor{grey}{4.0$_{\pm 0.6}$} & \textcolor{grey}{3.1$_{\pm 0.4}$} \\[1ex]

& FL & 16.8$_{\pm 5.4}$ & 4.7$_{\pm 0.2}$ & 15.9$_{\pm 5.4}$ & 4.7$_{\pm 0.2}$ & 14.6$_{\pm 4.3}$ & 3.8$_{\pm 0.6}$ & 14.3$_{\pm 4.7}$ & 4.5$_{\pm 0.3}$ & 16.4$_{\pm 6.2}$ & 4.6$_{\pm 0.3}$ & 16.2$_{\pm 5.4}$ & 4.7$_{\pm 0.3}$ \\[1ex]

& GEM & 14.3$_{\pm 6.0}$ & 4.6$_{\pm 0.3}$ & 13.2$_{\pm 5.0}$ & 4.6$_{\pm 0.3}$ & 13.0$_{\pm 6.2}$ & 3.4$_{\pm 0.6}$ & 11.5$_{\pm 4.0}$ & 4.5$_{\pm 0.3}$ & 13.9$_{\pm 4.6}$ & 4.5$_{\pm 0.3}$ & 13.4$_{\pm 5.1}$ & 4.7$_{\pm 0.3}$ \\[1ex]

& TOFU & \textbf{12.7$_{\pm 4.7}$} &  4.6$_{\pm 0.3}$ & \textbf{11.9$_{\pm 4.1}$} & 4.6$_{\pm 0.3}$ & \textbf{10.8$_{\pm 3.7}$} & 3.6$_{\pm 0.5}$ & \textbf{11.2$_{\pm 4.2}$} & 4.5$_{\pm 0.3}$ & \textbf{12.7$_{\pm 3.8}$} & 4.5$_{\pm 0.3}$ & \textbf{12.4$_{\pm 4.1}$} & 4.6$_{\pm 0.3}$ \\

\midrule

\multirow{6}{*}[-2.5ex]{SP} & Base & 12.7$_{\pm 7.0}$ & 3.8$_{\pm 0.9}$ & 13.9$_{\pm 7.2}$ & 3.8$_{\pm 1.0}$ & 8.4$_{\pm 3.0}$ & 2.6$_{\pm 1.0}$ &12.2$_{\pm 6.0}$ & 3.5$_{\pm 0.9}$ & 31.5$_{\pm 12.5}$ & 3.8$_{\pm 0.9}$ & 17.6$_{\pm 9.3}$ & 3.9$_{\pm 0.9}$ \\[1ex]

&  CE & 44.5$_{\pm 14.5}$ & 4.2$_{\pm 0.7}$ & 45.0$_{\pm 14.6}$ & 4.3$_{\pm 0.7}$ & 35.9$_{\pm 14.4}$ & 3.8$_{\pm 0.8}$ & 44.5$_{\pm 14.4}$ & 4.2$_{\pm 0.7}$ & 44.9$_{\pm 13.4}$ & 4.1$_{\pm 0.7}$ & 46.6$_{\pm 15.0}$ & 4.2$_{\pm 0.7}$ \\

\cmidrule(){2-14}

& \textcolor{grey}{$\lambda$-PR} & \textcolor{grey}{2.9$_{\pm 0.8}$}& \textcolor{grey}{2.0$_{\pm 0.7}$} & \textcolor{grey}{3.2$_{\pm 0.9}$} & \textcolor{grey}{2.3$_{\pm 0.7}$} & \textcolor{grey}{3.0$_{\pm 1.1}$} & \textcolor{grey}{2.0$_{\pm 0.6}$} & \textcolor{grey}{2.8$_{\pm 0.8}$} & \textcolor{grey}{2.1$_{\pm 0.7}$} & \textcolor{grey}{4.6$_{\pm 2.1}$} & \textcolor{grey}{2.6$_{\pm 0.7}$} & \textcolor{grey}{3.3$_{\pm 1.0}$} & \textcolor{grey}{2.4$_{\pm 0.6}$} \\[1ex]

& FL & 29.3$_{\pm 11.1}$ & 4.1$_{\pm 0.7}$ & 29.0$_{\pm 9.8}$ & 4.2$_{\pm 0.7}$ & 22.2$_{\pm 9.8}$ & 3.6$_{\pm 0.8}$ & 28.3$_{\pm 10.5}$ & 4.1$_{\pm 0.7}$ & 29.2$_{\pm 10.6}$ & 4.0$_{\pm 0.7}$ & 28.9$_{\pm 10.4}$ & 4.2$_{\pm 0.7}$ \\[1ex]

& GEM & 27.7$_{\pm 12.6}$ & 4.1$_{\pm 0.7}$ & 27.0$_{\pm 11.6}$ & 4.1$_{\pm 0.7}$ & 17.6$_{\pm 7.6}$ & 3.5$_{\pm 0.7}$ & 25.9$_{\pm 12.3}$ & 4.1$_{\pm 0.7}$ & 29.5$_{\pm 11.5}$ & 4.0$_{\pm 0.8}$ & 27.5$_{\pm 12.6}$ & 4.1$_{\pm 0.7}$ \\[1ex]

& TOFU & \textbf{21.2$_{\pm 8.8}$} & 4.0$_{\pm 0.6}$ & \textbf{21.3$_{\pm 8.6}$} & 4.1$_{\pm 0.6}$ & \textbf{16.1$_{\pm 7.6}$} & 3.5$_{\pm 0.8}$ & \textbf{20.9$_{\pm 8.4}$} & 4.0$_{\pm 0.7}$ & \textbf{24.1$_{\pm 8.9}$} & 4.0$_{\pm 0.7}$ & \textbf{21.8$_{\pm 8.7}$} & 4.1$_{\pm 0.6}$ \\

\bottomrule
\end{tabular}
\end{sc}
\end{small}
\end{center}
\vskip -0.1in
\end{table*}

\subsection{SFT Setup}

To fine-tune the selected models, we used the Alpaca instruction dataset \cite{taori_stanford_2023}, a widely adopted collection of diverse instructions paired with demonstrations. As an alternative, we also experimented with the UltraFeedback \cite{cui_ultrafeedback_2023} alignment dataset. To fine-tune reasoning models in a Chain-of-Thought (CoT) manner, we sampled 100, 000 problems from NuminaMath-CoT dataset \cite{numina}. 

We performed fine-tuning using the QLoRA framework \cite{dettmers_qlora_2023}, which applies Low-Rank Adaptation \cite{hu_lora_2021} to models quantized with 4-bit NormalFloat (NF4). This approach significantly reduces memory footprint and accelerates training without a substantial loss in performance. More technical details can be found in Appendix \ref{appendix:details}.

\subsection{Benchmarks}

\paragraph{Instruction Following} As our primary goal is to investigate how diversity is affected by SFT under different objectives, we prioritize benchmarks containing open-ended prompts that favor creative synthesis over deterministic accuracy. The more straightforward tasks, such as story continuation and constrained generation (instruction following), are represented by Short Stories (SS) and Small Prompts (SP), respectively. We composed both datasets from open-source materials, with the curation process detailed in Appendix~\ref{appendix:details}. Further challenging the models, we employ NoveltyBench (NB)~\cite{zhang_noveltybench_2025}. It is designed to measure the capacity for generating multiple distinct high-quality outputs using specifically curated prompts to elicit diverse responses. 

\paragraph{Reasoning} We also examine whether improved diversity aids in solving complex reasoning tasks by enabling a broader search for correct solutions across a suite of mathematical benchmarks: MATH500 \cite{hendrycks_measuring_2021}, GSM8K \cite{cobbe_training_2021} and MinervaMath \cite{lewkowycz_solving_2022}. The tested ability is particularly relevant if the goal of SFT is to enhance exploration without introducing low-quality noise, providing a better foundation for the subsequent Reinforcement Learning phase.

\paragraph{Factuality} One might be concerned that prioritizing output diversity could degrade factual accuracy. To monitor this trade-off, we evaluated the models on standard multiple-choice benchmarks for retrieval of professional and scientific knowledge, ARC \cite{clark_think_2018} and MMLU \cite{hendrycks_measuring_2020}. As our instruction tuning utilizes general-purpose datasets, these tasks primarily measure the retention of information from the initial training phase.

\paragraph{Safety}  Increased diversity may reduce sensitivity to malicious prompts by broadening the range of possible responses, including unsafe ones. To investigate this topic, we adopted two commonly used red-teaming benchmarks, specifically Malicious Instruct \cite{huang_catastrophic_2023} and HarmBench \cite{mazeika_harmbench_2024}, and evaluated the model's robustness to adversarial and harmful instructions. 

\subsection{Metrics}

We evaluate output diversity using several complementary approaches across the benchmarks. On the Short Stories and Small Prompts, we employ the widely adopted Self-BLEU metric \cite{zhu_texygen_2018}, which quantifies surface-level variation by measuring word and phrase overlap across responses for a given prompt. NoveltyBench provides its own diversity metric, \textit{Distinct}, employing a specialized LLM based classifier, that prioritizes semantic variation over surface-level linguistics. Consequently, it captures meaningful distinctions between outputs that standard metrics may overlook.

To assess the quality of the generations for SS and SP, we use an LLM-as-a-judge approach \cite{liu_g-eval_2023}, scoring responses based on coherence, consistency, and fluency (Appendix~\ref{app:judge}). Similarly, NoveltyBench proposes the \textit{Utility} metric and its associated LLM usefulness judge. 

For reasoning tasks, we evaluate exploration capabilities by calculating coverage (Pass@k), defined as the probability that at least one generated solution converges to the correct answer. We also track the mean success rate to distinguish between two behaviors: whether a model simply generates correct solutions more frequently (higher precision) or whether its increased diversity allows it to discover correct solutions for harder problems \cite{cobbe_training_2021}. In the latter case, the coverage would increase even if the mean success rate remains stagnant. To calculate both coverage and mean success scores, we extracted the final answers provided by the model in bounding box format and compared them against the ground truth.

For factual tasks, we decide on the output correctness directly comparing it with the reference answer. To evaluate safety alignment, we employ an Attack Success Rate (ASR) \cite{zou_universal_2023}, the percentage of instructions that receive misaligned outputs, defined as failing to abstain from responding to a malicious instruction.

\section{Results}

To investigate the performance of TOFU, we evaluate it along with the standard Cross-Entropy and the diversity-oriented objectives GEM and $\lambda$-PR. To isolate the impact of prioritizing infrequent
patterns, we also benchmark Focal Loss. Hyperparameter settings for GEM and $\lambda$-PR match their original publications, whereas the configurations for Focal Loss and TOFU were derived from ablation studies (see Appendix~\ref{appendix:ablations}). Notably, TOFU's parameters were selected only once, on the ARC dataset, and remained fixed across all subsequent experiments, demonstrating the method's robustness.

\paragraph{Creative Writing \& Instruction Following}

We first explore models diversity on Short Stories and Small Prompts. The results for Alpaca as SFT dataset are reported in Table~\ref{tab:alpacasssp}. On both benchmarks, Cross-Entropy increases quality but shows a significant reduction in diversity compared to the base model, confirming the problem addressed in this work. The only exception is the Qwen-3-8B model when evaluated on Short Stories. Its post-training procedures appear to have a significant residual effect: even in the absence of an instruction template, the model consistently attempts to engage in Chain-of-Thought reasoning. This case highlights the importance of employing unaligned models for SFT-related evaluations to avoid the confounding effects of prior alignment.

Beyond the initial baseline, our evaluation reveals a clear hierarchy of objectives performance. Focal Loss yields noticeably higher diversity than Cross-Entropy, though this gain sometimes comes at a slight cost to quality. This validates our hypothesis that diversity is negatively affected by the ignorance of underrepresented training samples during the SFT process. TOFU maintains quality on par with GEM while achieving superior diversity across all the models and therefore emerges as the prominent choice for these benchmarks. 

While $\lambda$-PR reaches the absolute highest diversity, it does so at the cost of significant quality degradation, pushing the method beyond the bounds of usability. Consequently, to maintain a focused and computationally efficient analysis, we decided to limit the use of $\lambda$-PR in further evaluation.

\begin{figure}[t]
  \vskip 0.2in
  \begin{center}
    \centerline{\includegraphics[width=\columnwidth]{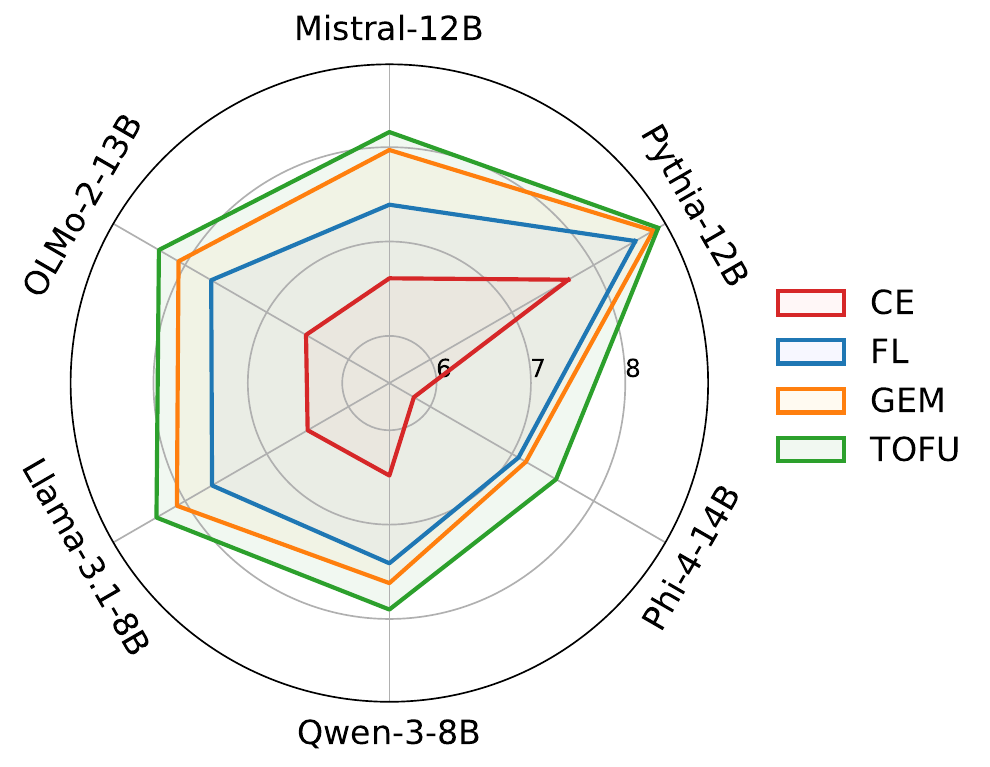}}
    \caption{
    The results on NoveltyBench dataset across different models and methods. Plotted values represent Distinct (1–10), measuring responses diversity, higher values indicate superior performance.
    }
    \label{fig:nb}
  \end{center}
\end{figure}
  
\begin{figure}[t]
  \vskip 0.2in
  \begin{center}
    \centerline{\includegraphics[width=\columnwidth]{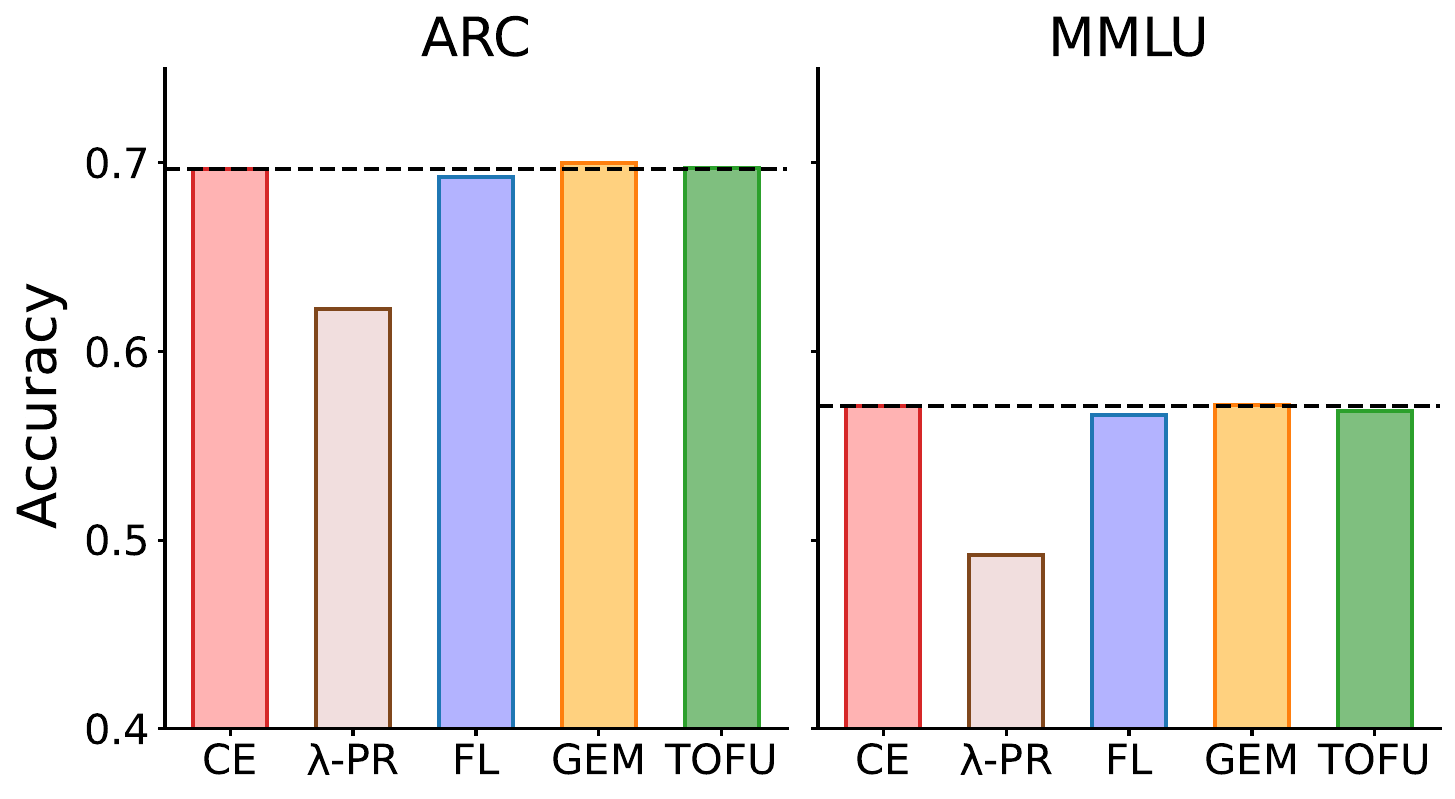}}
    \caption{
    Evaluation results of fine-tuning methods for the ARC and MMLU benchmarks averaged across the tested models. 
    The values represent the mean accuracy scores (0-1) across all tasks in each benchmark. The dotted line serves as a reference point for CE performance. 
    }
    \label{fig:arcmmlu}
  \end{center}
\end{figure}

The performance hierarchy on the NoveltyBench dataset remains largely consistent with our previous findings, as illustrated in Figure~\ref{fig:nb}. Specifically, TOFU consistently outperforms other objectives across all model families by demonstrating a superior Distinct score. Notably, quality remains comparable regardless of the model architecture, given the high Utility variance (Appendix~\ref{appendix:additional}). Ultimately, the results establish TOFU as the most effective diversity-oriented approach across both structured instruction and open-ended creative tasks. Additional experiments with UltraFeedback as an alternative SFT dataset further solidify this conclusion (Appendix~\ref{appendix:additional}).

\begin{table*}[t]
\caption{Performance of math reasoning models fine-tuned with different objectives on math datasets. The best values for the coverage and the average success rate (in parentheses) are highlighted in bold.}
\label{tab:mathreasoning}
\begin{center}
\begin{small}
\begin{sc}
\begin{tabular}{llccc}
\toprule
Bench & Method & Qwen-2.5-Math-1.5B & Qwen-2.5-Math-7B & DeepSeek-Math-7B \\
\midrule
\multirow{3}{*}{MATH500} & CE & 78.4 (\textbf{53.9}) & 84.8 (54.2) & 71.4 (32.7) \\
& GEM & 78.4 (53.4) & 83.2 (\textbf{59.0}) & 72.2 (\textbf{36.4}) \\
& TOFU & \textbf{80.6} (50.3) & \textbf{86.0} (53.8) & \textbf{72.6} (33.0) \\
\midrule
\multirow{3}{*}{Minerva} & CE & 28.3 (12.1) & 30.9 (\textbf{15.1}) & 34.2 (\textbf{12.5}) \\
& GEM & 31.3 (\textbf{12.4}) & 34.9 (\textbf{15.1}) & 35.3 (\textbf{12.5}) \\
& TOFU & \textbf{33.5} (10.7) & \textbf{38.6} (14.7) & \textbf{39.3} (12.0) \\
\midrule
\multirow{3}{*}{GSM8K} & CE & 79.2 (54.5) & 81.7 (60.1) & 76.0 (48.0) \\
& GEM & 73.2 (\textbf{55.5}) & 76.2 (\textbf{61.5}) & 72.4 (\textbf{51.2}) \\
& TOFU & \textbf{80.6} (51.8) & \textbf{88.0} (46.8) & \textbf{79.0} (47.5) \\
\bottomrule
\end{tabular}
\end{sc}
\end{small}
\end{center}
\vskip -0.1in
\end{table*}
\paragraph{Mathematical Reasoning}

Our results on mathematical reasoning, summarized in Table~\ref{tab:mathreasoning}, demonstrate that TOFU models achieve a visibly higher coverage than the Cross-Entropy and GEM, while average success rates remain stagnant or slightly decline. This observation suggests that TOFU does not increase per-sample correctness, but instead promotes more diverse exploration, leading to a higher probability of finding a correct solution. 

\paragraph{Factuality}
Regarding the factuality tasks, we observe no evidence of catastrophic forgetting, with performance remaining largely comparable to CE models. The sole exception is $\lambda$-PR, which exhibits a consistent performance degradation. In Figure~\ref{fig:arcmmlu} we report accuracies after Alpaca SFT averaged across all models, while the full results are available in Appendix~\ref{appendix:additional}. Overall, TOFU high diversity does not come at the cost of factual knowledge.

\paragraph{Safety}
To demonstrate that the diversity gains do not compromise models safety, we categorized models into two groups: \textit{safe} models (Qwen-3 and Phi-4), which express inherent security guardrails, and \textit{unsafe} models, which are much less sensitive to malicious prompts. Figure~\ref{fig:safety} reports the average Attack Success Rate (ASR) for both groups on Malicious Instruct, while detailed results on both Malicious Instruct and HarmBench are provided in Appendix~\ref{appendix:additional}. Our results indicate that for the unsafe model group, safety metric appears to have reached a saturation point where further degradation is negligible. Conversely, for safe models, TOFU does not lead to an increase in ASR, demonstrating that the method avoids introducing new vulnerabilities to jailbreak attempts.

\section{Discussions}

In this work, we have presented an extensive evaluation of SFT objectives across a range of model families, datasets, and benchmarks. Through a systematic analysis we provide robust confirmation that standard Cross-Entropy SFT consistently reduces diversity. To our best knowledge, this is the first study to systematically validate this trend across a broad range of model scales and diverse SFT datasets, as previous work of \citet{omahony_attributing_2024} primarily explored this hypothesis within the Pythia family at smaller sizes. By extending the evaluation to these more substantial architectures, we provide empirical evidence that diversity loss is an inherent characteristic of the standard SFT pipeline rather than an artifact of specific smaller-scale architectures.

\begin{figure}[t]
  \vskip 0.2in
  \begin{center}
    \centerline{\includegraphics[width=\columnwidth]{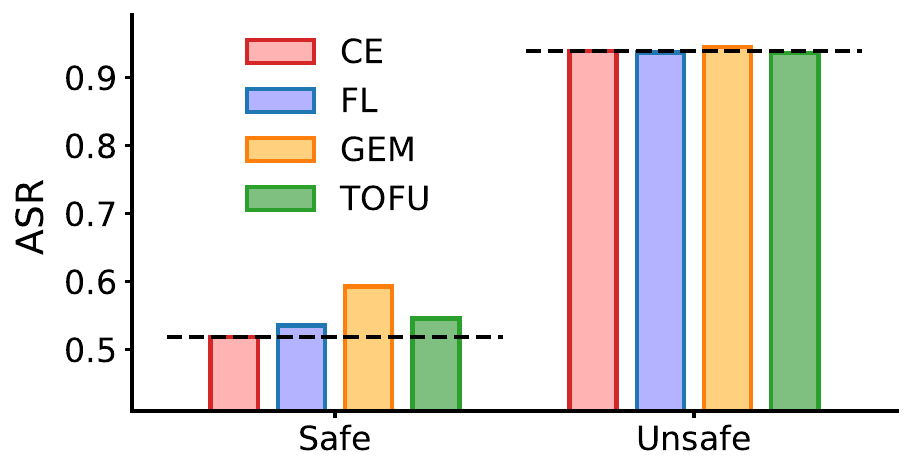}}
    \caption{
    Evaluation results of fine-tuning methods for the Malicious Instruct benchmark averaged across the safe (security aligned) and unsafe (base) models. The values represent the mean ASR scores (0-1) across all tasks. Lower scores indicate safer models. Error bars (standard deviation) are omitted to maintain
visual clarity across multiple model comparisons. The dotted line serves as a reference point for CE performance. 
    }
    \label{fig:safety}
  \end{center}
\end{figure}

We initially attributed the decline to two primary factors: the erosion of pretrained information and the ignorance of low-frequency patterns in the fine-tuning data. This hypothesis is supported by the performance of GEM, designed to mitigate forgetting, and Focal Loss, designed to prioritize underrepresented samples, both of which yield measurable improvements in diversity compared to CE. Through careful theoretical analysis, we created a novel training objective, TOFU, which tackles both challenges simultaneously. 

TOFU achieves the highest diversity while maintaining highly competitive quality across our creative writing and instruction following benchmarks. Furthermore, we found that, in mathematical Chain-of-Thought reasoning, TOFU encourages a higher exploration mode, thereby increasing the probability of capturing a correct solution. This amplified response breadth might provide a superior foundation for the Reinforcement Learning alignment phase. Additionally, we showed that expanded diversity does not come at the cost of factual integrity and safety alignment. Altogether, these results position TOFU as a robust framework for improving model expressivity, enabling greater functional utility across a wide range of downstream applications.

While TOFU demonstrated stable performance across various model sizes and SFT datasets, there remains potential for further validation. Given greater computational resources, future experiments could evaluate these objectives on larger-scale architectures and higher-quality closed-source instruction tuning datasets. Finally, while this study focuses on the supervised phase, the exact influence of the diversity-enhancing objectives on subsequent Reinforcement Learning stages remains an open question for future investigation. We hope that our results encourage further research into specialized training objectives that leverage theoretical insights to refine model performance. Additionally, we look forward to seeing the community apply more extensive resources to test and build upon these findings.

\paragraph{Reproducibility Statement}
The code is available at \url{https://github.com/rsklypa/TOFU}, the datasets are available at \url{https://huggingface.co/TOFU-SFT}.

\section*{Impact Statement}

This work aims to advance the field of Machine Learning by addressing generative diversity and knowledge retention in fine-tuned Large Language Models. By mitigating the forgetting of pretrained information and the underrepresentation of rare patterns, our research supports the development of models that produce more varied and information-rich outputs. There are many potential societal consequences of improving generative variety, none of which we feel must be specifically highlighted here beyond the ethical considerations standard to the advancement of language modeling.

\section*{Acknowledgements}
The authors acknowledge the National Academic Infrastructure for Supercomputing in Sweden (NAISS) for granting this project access to high-performance clusters. The computations and data handling were enabled by Alvis, provided by Chalmers e-Commons at Chalmers, and Berzelius, provided by the Knut and Alice Wallenberg Foundation at the National Supercomputer Centre by the National Academic Infrastructure for Supercomputing in Sweden (NAISS). During this project, O.C. was supported by the SciLifeLab \& Wallenberg Data Driven Life Science Program (a DDLS Academic PhD grant to Eric Libby and Laura Michelle Carroll).

\bibliography{references}
\bibliographystyle{icml2026}

\newpage
\onecolumn
\section*{Appendix}
\appendix

\counterwithin{figure}{section}
\counterwithin{table}{section}
\counterwithin{equation}{section}
\counterwithin{algorithm}{section}

\renewcommand{\thefigure}{\Alph{section}.\arabic{figure}}
\renewcommand{\thetable}{\Alph{section}.\arabic{table}}
\renewcommand{\theequation}{\Alph{section}.\arabic{equation}}
\renewcommand{\thealgorithm}{\Alph{section}.\arabic{algorithm}}

\section{Omitted proofs}
\label{appendix:proofs}
\begin{proposition}[Notation and Logit Gradients]
\label{pr:logit}
We introduce the notation used throughout the following proofs and derivations. All expressions below are written for a single token, with indices ranging over the vocabulary (i.e., number of classes). Consider a temperature-scaled distribution $p^\beta$ from Definition (\ref{def:gem}), a Kronecker's delta matrix $\delta_{ij}$, and $z$ as model's output logits. Let us denote the following log probabilities
    \begin{gather}
    l_i = \log p_i = z_i - \log \sum_k e^{z_k} \\
    l^\beta_i = \log p^\beta_i = \frac{z_i}{\beta} - \log \sum_k e^{\frac{z_k}{\beta}} 
    \end{gather}
Then their derivatives with respect to the logits are given by 
    \begin{gather}
    \frac{\partial l_i}{\partial z_j} =  \delta_{ij} - p_j = \delta_{ij} - e^{l_j} \\
    \frac{\partial l^\beta_i}{\partial z_j} = \frac{1}{\beta}(\delta_{ij} - p^\beta_j)
\end{gather}
\begin{proof}
The computation is straightforward, partial derivatives of log probabilities are computed as
\begin{gather}
    \frac{\partial l_i}{\partial z_j} = \delta_{ij} - \frac{e^{z_j}}{\sum_k e^{z_k}} = \delta_{ij} - e^{l_j} = \delta_{ij} - p_j, \\
    \frac{\partial l^\beta_i}{\partial z_j} = \frac{1}{\beta}\delta_{ij} - \frac{1}{\beta} \frac{e^{\frac{z_j}{\beta}}}{\sum_ke^{\frac{z_k}{\beta}}}= \frac{1}{\beta}(\delta_{ij} - p^\beta_j).
\end{gather}
\end{proof}
\end{proposition}

While the final loss gradient with respect to the model parameters is independent of whether intermediate derivatives are computed via $l$ or $p$, practical implementations that omit the explicit calculation of $p$ make derivatives with respect to $l$ more informative. We illustrate this on the Shannon entropy example:
\begin{equation}
    \mathcal{H} \doteq -p_i \log p_i = -p_{i}l_{i}.
\end{equation}

Its gradients with respect to $p$ are
\begin{equation}
\label{eq:hdiv}
    \frac{\partial\mathcal{H}}{\partial p_i} =  \frac{\partial}{\partial p_i}(-p_i \log p_i )=-\log p_i -1.
\end{equation} 
Based on the eq. (\ref{eq:hdiv}) in low probability setting when $p\rightarrow 0$ the Shannon entropy gradient diverges $\frac{\partial\mathcal{H}}{\partial p_i} \rightarrow \infty$. We now examine the gradients with respect to log probabilities, as these are the values calculated in practical implementations:
\begin{gather}
    \label{eq:nohdiv}
    \frac{\partial\mathcal{H}}{\partial l_i} = \frac{\partial}{\partial l_i} (-p_{i}l_{i}) = -l_i \frac{\partial p_i}{\partial l_i}- p_i=- l_i p_i - p_i = - p_i \log p_i -p_i.
\end{gather}
From the eq. (\ref{eq:nohdiv}) it is clear that there is no explosion as when $p \rightarrow 0$, the gradients are $\frac{\partial\mathcal{H}}{\partial l_i} \rightarrow 0$ due to the following limit:
\begin{equation}
   \lim_{p\rightarrow0} (l p - p) = \lim_{p\rightarrow0}( p \log p -p) = 0.
\end{equation} 
The gradients remain robust throughout the rest of backpropagation chain:
\begin{equation}
    \frac{\partial\mathcal{H}}{\partial z_j} = \sum_i\frac{\partial\mathcal{H}}{\partial l_i} \frac{\partial l_i}{\partial z_j} =  \sum_i- (l_i p_i + p_i)(\delta_{ij} - p_j) = -l_j p_j +\sum_i l_i p_i p_j = -p_j \big(l_j - \sum_i l_i p_i \big)
\end{equation}
This example demonstrates how using $l$ rather than $p$ helps avoid false assumptions regarding the method's behavior.

Proposition~\ref{pr:gem} derives the formulation of the GEM loss used in Definition~\ref{def:gem}, translating the original objective into our notation and providing a simplified expression that is more comprehensible on sight.

\begin{proposition}[Reformulation of the original GEM from \cite{li_preserving_2024} with our notation]
\label{pr:gem}
Consider a model distribution $p_\theta$, data distribution $q$ and a temperature-scaled distribution $p_{\theta}^{\beta}=\text{softmax}(\beta^{-1} \log p_{\theta})$.
Consider $y^{\text{real}}$ is the supervised label in the dataset and $y^{\text{gene}}$ is the model-generated output.
The original GEM formulation is provided in the following equation:
\begin{gather}
\label{eq:orig gem}
\mathcal{L}_{\mathrm{GEM}}(\theta) = \sum_{y^{\text{real}} \sim q}\sum_{y^{\text{gene}} \sim p_{\theta}}\red{p_{\theta}^{\beta}(y^{\text{gene}})}\left[\log p_{\theta}(y^{\text{gene}}) - \log p_{\theta}(y^{\text{real}})\right],
\end{gather}
where $\beta \in (0,1)$ is a temperature parameter and $\red{p^{\beta}(y^{\text{gene}})}$ is detached from gradient computational graph. 
The loss (\ref{eq:orig gem}) in our notation can be reformulated as
\begin{equation}
    \mathcal{L}_{\mathrm{GEM}}(\theta) = -  \mathbb{E}_{q} [\log p_\theta] + \mathbb{E}_{\red{p^\beta_\theta}} [\log p_\theta].
\end{equation}

\begin{proof}
Let us rewrite the eq. (\ref{eq:orig gem}) with respect to the detached term:
\begin{gather}
\mathcal{L}_{\mathrm{GEM}}(\theta) = \mathbb{E}_{y^{\text{real}} \sim q}\mathbb{E}_{y^{\text{gene}} \sim \red{p_{\theta}^\beta}}\left[\log p_{\theta}(y^{\text{gene}}) - \log p_{\theta}(y^{\text{real}})\right] =
\\
= \mathbb{E}_{y^{\text{real}} \sim q}\mathbb{E}_{y^{\text{gene}} \sim \red{p_{\theta}^\beta}}\left[\log p_{\theta}(y^{\text{gene}})\right] - \mathbb{E}_{y^{\text{real}} \sim q}\mathbb{E}_{y^{\text{gene}} \sim \red{p_{\theta}^\beta}}\left[\log p_{\theta}(y^{\text{real}})\right] =
\\
=\mathbb{E}_{y^{\text{gene}} \sim \red{p_{\theta}^\beta}}\left[\log p_{\theta}(y^{\text{gene}})\right] - \mathbb{E}_{y^{\text{real}} \sim q}\left[\log p_{\theta}(y^{\text{real}})\right].
\end{gather}
Now, by disregarding notational $y^{\text{gene}}$ and $y^{\text{real}}$, it is clear that GEM exactly matches the following:
\begin{equation}
    \mathcal{L}_{\mathrm{GEM}}(\theta) = -  \mathbb{E}_{q} [\log p_\theta] + \mathbb{E}_{\red{p^\beta_\theta}} [\log p_\theta].
\end{equation}
\end{proof}
\end{proposition}

\begin{theorem}[Theorem \ref{th:gem}]
    \label{th_app:gem}
    Training with $\mathcal{L}_{\mathrm{GEM}}$ is equivalent to training with temperature-scaled Cross-Entropy loss, as 
    \begin{equation}
        \nabla_\theta \mathcal{L}_{\mathrm{GEM}}(\theta) = \nabla_\theta \mathcal{L}^\beta_{\mathrm{CE}}(\theta),
    \end{equation}
    where $\mathcal{L}^\beta_{\mathrm{CE}}(\theta) \doteq -\beta \mathbb{E}_{q} \log p^\beta_\theta$.
\begin{proof}
    The Cross-Entropy loss is defined as $\mathcal{L}_{\mathrm{CE}}=-q_i\log p_i = -q_i l_i$.
    Using the chain rule and Proposition \ref{pr:logit}, Cross-Entropy gradients with respect to log probabilities and logits are
    \begin{gather}
    \frac{\partial\mathcal{L}_{\mathrm{CE}}}{\partial l_i} = - q_i, \\
    \frac{\partial\mathcal{L}_{\mathrm{CE}}}{\partial z_j}= \sum_i\frac{\partial\mathcal{L}_{\mathrm{CE}}}{\partial l_i} \frac{\partial l_i}{\partial z_j}=-\sum_iq_i (\delta_{ij} -p_j) =p_j - q_j.
    \end{gather}
    Similarly, gradients for $\mathcal{L}_{\mathrm{GEM}} = -q_i \log p_i + p_i^\beta \log p_i$ are
    \begin{gather}
        \frac{\partial\mathcal{L}_{\mathrm{GEM}}}{\partial z_j} = \sum_i \frac{\partial\mathcal{L}_{\mathrm{GEM}}}{\partial l_i} \frac{\partial l_i}{\partial z_j} = \sum_i (p_i^\beta - q_i)(\delta_{ij} - p_j) = p_j^\beta - q_j.
    \end{gather}
    Let us calculate the gradients for scaled Cross-Entropy $\mathcal{L}^\beta_{\mathrm{CE}}=-\beta q_i\log p^\beta_i = -\beta  q_i l_i^\beta$.
    \begin{gather}
    \frac{\partial\mathcal{L}^\beta_{\mathrm{CE}}}{\partial z_j}= \sum_i\frac{\partial\mathcal{L}^\beta_{\mathrm{CE}}}{\partial l^\beta_i} \frac{\partial l^\beta_i}{\partial z_j}=-\sum_i \beta q_i \cdot \frac{1}{\beta} (\delta_{ij} -p^\beta_j) =p^\beta_j - q_j.
    \end{gather}
    Now, as $\nabla_\theta z$ does not depend on the loss function, we have
    \begin{equation}
        \nabla_\theta \mathcal{L}_{\mathrm{GEM}}(\theta) = \nabla_\theta \mathcal{L}^\beta_{\mathrm{CE}}(\theta).
    \end{equation}
\end{proof}
\end{theorem}

\begin{proposition}[Proposition \ref{pr:focal}]
    \label{pr_app:focal}
    Assume that the target distribution $q$ is one-hot. Then, for Focal Loss $\mathcal{L}_{\mathrm{FL}}$ and Cross-Entropy $\mathcal{L}_{\mathrm{CE}}$, the gradients satisfy
    \begin{equation}
        \nabla_\theta \mathcal{L}_{\mathrm{FL}}(\theta) = g(\hat{p}_\theta,\gamma) \nabla_\theta \mathcal{L}_{\mathrm{CE}}(\theta),
    \end{equation}
    where $g(p, \gamma) = (1-p)^\gamma - \gamma p (1-p)^{\gamma-1} \log p$. Here and further $\hat{p}$ denotes the predicted probability assigned to the ground-truth token.
\begin{proof}
    First let's consider $q$ to be arbitrary. Then by the chain rule and Proposition \ref{pr:logit} the gradients of Focal Loss are
    \begin{gather}
        \frac{\partial\mathcal{L}_{\mathrm{FL}}}{\partial l_i} = -(1-p_i)^\gamma q_i + \gamma (1-p_i)^{\gamma - 1} q_i l_i p_i, \\ 
        \frac{\partial\mathcal{L}_{\mathrm{FL}}}{\partial z_j} =  \sum_i  \frac{\partial\mathcal{L}_{\mathrm{FL}}}{\partial l_i}\frac{\partial l_i}{\partial z_j}=  -(1-p_j)^\gamma q_j + \gamma (1-p_j)^{\gamma - 1} q_j l_j p_j + \\
    +\sum_i ((1-p_i)^\gamma q_i p_j - \gamma (1-p_i)^{\gamma - 1} q_i l_i p_i p_j) = \\
    = -q_j ((1-p_j)^\gamma - \gamma (1-p_j)^{\gamma - 1} l_j p_j) + p_j \sum_i ((1-p_i)^\gamma q_i - \gamma (1-p_i)^{\gamma - 1} q_i l_i p_i).
    \end{gather}
    In this case, the resulting gradients are not proportional to the gradients of the Cross-Entropy loss. However, if $q$ is one-hot, meaning $q_k = 1$ and $q_{i \neq k} = 0$ for some $k$, then 
    \begin{gather}
        \frac{\partial\mathcal{L}_{\mathrm{FL}}}{\partial z_j} = \underbrace{((1-p_k)^\gamma - \gamma (1-p_k)^{\gamma - 1} l_k p_k)}_{g(p_k, \gamma)} (p_j - q_j),
    \end{gather}
    and therefore, as $\nabla_\theta z$ does not depend on the loss function and denoting $(1-p)^\gamma - \gamma p (1-p)^{\gamma-1} \log p$ as $g(p, \gamma)$, we have
    \begin{gather}
        \nabla_\theta \mathcal{L}_{\mathrm{FL}}(\theta) = g(\hat{p}_\theta,\gamma) \nabla_\theta \mathcal{L}_{\mathrm{CE}}(\theta).
    \end{gather}
\end{proof}
\end{proposition}

\begin{corollary}[Corollary \ref{cor:focalb}]
    \label{cor_app:focalb}
    If the target distribution $q$ is one-hot
    \begin{equation}
        -\beta \nabla_\theta \mathbb{E}_{q} \left [ (1-p_\theta^\beta)^\gamma \log p_\theta^\beta \right ] = g(\hat{p}_\theta^\beta, \gamma) \nabla_\theta \mathcal{L}^\beta_{\mathrm{CE}}(\theta)
    \end{equation}
\begin{proof}
    By the chain rule, Proposition \ref{pr:logit}, and analogously to the proof of Proposition \ref{pr:focal}, if the target distribution $q$ is one-hot
    \begin{gather}
        \frac{\partial}{\partial z_j} \sum_i (1-p_i^\beta)^\gamma q_i \log p_i^\beta = \frac{1}{\beta}\underbrace{((1-p^\beta_k)^\gamma - \gamma (1-p^\beta_k)^{\gamma - 1} l^\beta_k p^\beta_k)}_{g(\hat{p}_k^\beta, \gamma)} (q_j - p^\beta_j).
    \end{gather}
    Therefore, as $\nabla_\theta z$ does not depend on the loss function
    \begin{equation}
        -\beta \nabla_\theta \mathbb{E}_{q} \left [ (1-p_\theta^\beta)^\gamma \log p_\theta^\beta \right ] = g(\hat{p}_\theta^\beta, \gamma) \nabla_\theta \mathcal{L}^\beta_{\mathrm{CE}}(\theta).
    \end{equation}
\end{proof}
\end{corollary}

\begin{corollary}[Corollary \ref{cor:tony}]
    \label{cor_app:tony}
    If the target distribution $q$ is one-hot, TOFU gradients are proportional to the ones of the temperature-scaled CE: 
    \begin{gather}
        \nabla_\theta \mathcal{L}_{\mathrm{TOFU}}(\theta) = g(\hat{p}_\theta,\gamma)\nabla_\theta \mathcal{L}^\beta_{\mathrm{CE}}(\theta)
    \end{gather}
\begin{proof}
    As $g(\hat{p}_\theta,\gamma)$ is detached from gradients computation
    \begin{gather}
        \nabla_\theta \mathcal{L}_{\mathrm{TOFU}}(\theta) = \nabla_\theta \red{g(\hat{p}_\theta,\gamma)} \mathcal{L}^\beta_{\mathrm{CE}}(\theta) = g(\hat{p}_\theta,\gamma)\nabla_\theta \mathcal{L}^\beta_{\mathrm{CE}}(\theta)
    \end{gather}
\end{proof}
\end{corollary}

\newpage
\section{Experimental details}
\label{appendix:details}

In this section we provide comprehensive descriptions for the models, benchmarks and datasets used in our study.

\subsection{Models}

\paragraph{OLMo-2-13B}
OLMo-2-1124-13B \cite{groeneveld_olmo_2024} from the Allen Institute for AI, trained on the Dolma \cite{soldaini_dolma_2024} dataset for improved performance on tasks such as text generation and instruction following. This model is released under Apache 2.0 license.

\paragraph{Mistral-12B}
Mistral NeMo \cite{mistral_ai_mistral_2024} is trained jointly by Mistral AI and NVIDIA. It is designed for diverse tasks including text generation and instruction following. This model is released under Apache 2.0 license.

\paragraph{Pythia-12B}
Pythia 12B \cite{biderman_pythia_2023} is trained on the Pile \cite{gao_pile_2020} as a scientific tool for studying model functionality and interpretability rather than for deployment or human-facing interactions. This model is released under Apache 2.0 license.

\paragraph{Llama-3.1-8B}
Llama-3.1 \cite{touvron_llama_2023} is released by Meta AI as an extension of the Llama-3 series. The model serves as a strong foundation for downstream fine-tuning and alignment methods, making it widely adopted in both research and applied settings. This model contains custom Llama-3.1 license \footnote{\url{https://github.com/meta-llama/llama-models/blob/main/models/llama3_1/LICENSE}}.

\paragraph{Qwen-3-8B}
Qwen3 \cite{yang_qwen3_2025} is a family of large language models developed by Alibaba Cloud, designed to support general-purpose language understanding, reasoning, and instruction-following tasks. The model is trained on a diverse mixture of web, code, and domain-specific data. This model is released under Apache 2.0 license.

\paragraph{Phi-4-14B}
Phi-4 \cite{abdin_phi-4_2024} is a model family developed by Microsoft, focusing on efficiency and strong reasoning capabilities under limited parameter budgets. It is trained using a carefully curated dataset that emphasizes high-quality, synthetic, and textbook-style data. This model is released under MIT license.

\paragraph{Qwen2.5-Math} Qwen2.5-Math \cite{yang_qwen25-math_2024} is a branch of math-specific large language models from Qwen family. The models in this series possess advanced mathematical reasoning capabilities, including Chain-of-Thought (CoT). This model is released under Apache 2.0 license.

\paragraph{DeepSeek-Math-7B} DeepSeek-Math \cite{shao_deepseekmath_2024} is a collection of models pre-trained on math-related tokens sourced from Common Crawl, together with natural language and code data for 500B tokens. This model is released under MIT license.

\subsection{SFT Datasets}
\paragraph{Alpaca} Alpaca\footnote{\url{https://huggingface.co/datasets/tatsu-lab/alpaca}} \cite{taori_stanford_2023} is a widely used instruction-following dataset consisting of approximately 52K instruction–response pairs generated using a self-instruct framework. The dataset covers a broad range of tasks, including question answering, summarization, reasoning, and creative writing. To preprocess the Alpaca dataset, we filter and format each example into prompt–completion pairs. We use explicit delimiters for the instruction, input, and response to provide structural context for the sequence. The model is trained to generate the response following an opening delimiter and is explicitly required to produce a matching delimiter to signal completion. Alpaca is available under CC-BY-NC-4.0 license.

\paragraph{UltraFeedback} UltraFeedback \cite{cui_ultrafeedback_2023} is a preference-annotated dataset hosted on Hugging Face by openbmb, derived from the UltraFeedback corpus and adapted for supervised fine-tuning and instruction tuning. The dataset comprises 64K samples, each containing an AI-generated judgment that identifies a preferred response. Our pipeline filters the UltraFeedback dataset for top-quality examples, retaining only the highest-scoring completion per prompt, provided that it meets a minimum score of 7, resulting in 57,400 samples. We format these pairs using fixed templates the same as for the Alpaca dataset. UltraFeedback is available under MIT license.

\paragraph{NuminaMath CoT}  NuminaMath \cite{numina} is a dataset of approximately 860,000 mathematical problems, designed to examine the model's reasoning in Chain of Thought (CoT) manner. The dataset covers a wide range of tasks, from Chinese high school math exercises to US and international mathematics olympiad competition problems. We filtered 100,000 examples including all sources, ensuring that the solution is given in a bounding box format. This dataset is available under Apache 2.0 license.

\subsection{SFT details}
\label{SFT details}
Given the limitation of our computational resources, we performed 4-bit NormalFloat quantization of selected models and utilized the Quantized Low Rank Adaptation \cite{dettmers_qlora_2023} technique to optimize our workflow. On top of that, we used gradient accumulation to increase the total batch size. 

All models were trained for a single epoch using a linear learning rate schedule with a peak of $2\times10^{-4}$ and 50 warmup steps. We employed a batch size of 2 with 4 gradient accumulation steps and a weight decay of 0.01. For the LoRA adapter, we set $r=16$ and $\alpha=16$.

\subsection{Evaluation Datasets}

\textbf{Short Stories} is a continuation benchmark where a model is given a story beginning and must generate a coherent conclusion. We constructed this dataset by randomly selecting 100 different stories from the ROCStories corpus \cite{mostafazadeh_corpus_2016}, each containing exactly five sentences. For our evaluation, the first four sentences serve as the beginning of the story, providing sufficient context for a logical continuation.
    
\textbf{Small Prompts} benchmark is a collection of short questions from the \textit{helpful\_base} subset of the AlpacaFarm \cite{dubois_alpacafarm_2023} Hugging Face repository, comprising 129 prompts. We take only this portion of the original dataset to ensure that the evaluation remains focused on standard natural language (in opposition to code or ASCII symbol drawings), as the Self-BLEU metric does not function reliably outside of this domain.

\textbf{NoveltyBench} \cite{zhang_noveltybench_2025} is a benchmark designed to evaluate language models’ ability to generate multiple distinct and high-quality outputs for the same prompt, removing the traditional focus from a single best response. For the evaluation, we selected its \textit{NB-curated} subset, which contains 100 manually curated prompts. We utilized the original NoveltyBench framework and source code, including the default parameters for their proprietary quality and diversity metrics, Utility-k and Distinct-k. The code is available under MIT license.

\textbf{Massive Multitask Language Understanding} (MMLU) \cite{hendrycks_measuring_2020} is a benchmark designed to evaluate the knowledge and reasoning capabilities of language models across multiple subject areas, spanning STEM disciplines, humanities, and social sciences. The dataset includes questions of varying difficulty levels, ranging from elementary concepts to advanced professional knowledge. For the benchmarking, we used its test subset, comprising 14042 questions. This dataset is available under MIT license.

\textbf{ARC-Challenge} (ARC) \cite{clark_think_2018} is a benchmark dataset of multiple-choice science questions curated to evaluate advanced reasoning and scientific understanding. The questions are sourced from standardized science examinations for grades 3 through 9 and are intentionally selected to be challenging for both humans and AI systems. For the benchmarking, we used its test subset, comprising 1172 questions. This dataset is available under CC-BY-SA-4.0 license.

\textbf{MATH500} is an open-source test subsample of the original MATH dataset \cite{hendrycks_measuring_2021}. It comprises 500 problems alongside their solutions in various subjects, such as algebra, geometry, calculus, and probability. The code accompanying this dataset is available under MIT license.

\textbf{MinervaMath} is a publicly available subset of Minerva corpus \cite{lewkowycz_solving_2022}, consisting of 272 mathematical problems related to natural sciences. This dataset is available under MIT license.

\textbf{GSM8K} is a collection of 8,800 high quality linguistically diverse grade school math word problems \cite{cobbe_training_2021}. For the evaluation we used a main test set of 1,320 tasks. This dataset is available under MIT license.

\textbf{Malicious Instruct} is a human crafted dataset \cite{huang_catastrophic_2023}, consisting of 100 prompts. To construct the datasets the authors selected ten categories and asked ChatGPT to provide 20 responses for each of the categories. They manually reviewed the generated responses and selected 100 responses such that they are aligned with the topic and diverse at the same time.

\textbf{Harm Bench} is a standardized evaluation framework for automated red teaming \cite{mazeika_harmbench_2024}, consisting of 400 malicious prompts in different categories: copyright, contextual, and standard. For our main experiments, we randomly selected 100 prompts from the standard subset. This dataset is available under MIT license.
\subsection{Inference details}

The model’s generation parameters were selected based on the specific requirements of each evaluation task. Following standard empirical practices in the field, we employed a stochastic sampling strategy for the creative writing \& instruction following, mathematical reasoning and safety benchmarks. Specifically, we used nucleus sampling \cite{holtzman_curious_2020} with a cumulative probability threshold of $p=0.9$ and different unit temperatures, depending on the task. 

For instruction following and safety we employed $T=1.0$ for all the training objectives, while in mathematical reasoning experiments each loss utilizes a specific temperature. While we maintained $T=1.0$ for CE, GEM and TOFU exhibited high performance variance at this default, leading to non-robust evaluations where per-run fluctuations determined the top-performing method. This instability is inherent to the reasoning task, where a single incorrect token can derail the entire Chain-of-Thought, and minor deviations from the required format result in the response being classified as incorrect. Therefore, we empirically derived $T=0.3$ such that the variance of the coverage across the inference runs for each objective stays in range of $1\%$. For creative writing, instruction following, and safety benchmarks we generated 10 responses per prompt, and for mathematical reasoning we employed 16 decoding runs. 
In contrast, for the ARC and MMLU, we used deterministic greedy decoding to ensure objective and reproducible outputs, generating a single completion per prompt. 

Due to computational constraints, the maximum response length was restricted across all the experiments. We allocated a limit of 64 tokens for creative benchmarks (Short Stories, Small Prompts, Novelty Bench) and 8 tokens for multiple-choice ones (ARC, MMLU). For Malicious Instruct and HarmBench datasets we used limit of 128 tokens. For experiments with mathematical reasoning in a Chain-of-Thought (CoT) manner, we set the maximal generation length of 4096 tokens. 

For the inference of SFT models, we adhere to the standard instruction template, incorporating an additional prompt if necessary, depending on the benchmark. Regarding the base models, we observed that they successfully continue the narrative on the SS benchmark when provided with substantial initial context, even without explicit instructions. This enables a direct comparison with their fine-tuned counterparts. For the SP benchmark, we follow the protocol established by \citet{omahony_attributing_2024}. All prompt templates are provided in the accompanying code repository\footnote{\url{https://github.com/rsklypa/TOFU}}.

\subsection{LLM Judge}
\label{app:judge}

To assess the quality of the responses, we employ large language model as a judge to score and compare generated responses on Short Stories and Small Prompts datasets.  Specifically, we use \textbf{Llama-3.1-70B-Instruct} \cite{grattafiori_llama_2024}, an instruction-aligned model. To reduce memory footprint, we quantized the Judge with 4-bit NormalFloat. 
We prompted the Judge with comprehensive instructions to provide a score from 0 to 5, where 0 corresponds to an incoherent, off-topic, or nonsensical response, and 5 corresponds to a seamless, natural, and stylistically consistent one. Given the restriction in tokens that we applied in the inference stage, we explicitly state in the Judge instruction not to penalize the response if it ends abruptly due to the aforementioned limit. However, it is required to penalize a logically or stylistically flawed ending. To ensure a robust evaluation, we used greedy decoding and strict response template.

\section{Ablations}
\label{appendix:ablations}
We find best parameters for Focal Loss and TOFU objectives by evaluating quality and diversity on NoveltyBench and accuracy on ARC. For these experiments we employ Mistral-12B model. We tested the following ranges of parameters: $\gamma \in [2,5]$ and $\beta \in [0.6,0.9]$. According to the results, gathered in Table \ref{tab:ablation}, the best hyperparameters are $\gamma=3$ for Focal Loss and $\gamma=3, \beta=0.8$ for TOFU. While we selected the optimal values primarily based on ARC accuracy, they also coincide with the top-performing configurations for NoveltyBench Utility. In contrast with the rest of the experiments on NoveltyBench, here we generated only 5 responses per prompt.

\begin{table}[h!]
\caption{ARC Accuracy, NoveltyBench Distinct (1-10) and Utility (1-10) results for the Mistral-12B model fine-tuned with Focal Loss and TOFU with different $(\gamma, \beta)$ configurations.}
\label{tab:ablation}
\begin{center}
\begin{small}
\begin{sc}
\setlength{\tabcolsep}{4pt}
\begin{tabular}{ccccccccccccccccc}
\toprule
& \multicolumn{4}{c}{Focal Loss} & \multicolumn{12}{c}{TOFU} \\
\midrule
$\boldsymbol{\beta}$ & - & - & - & - & 0.6 & 0.6 & 0.6 & 0.7 & 0.7 & 0.7 & 0.8 & \textbf{0.8} & 0.8 & 0.9 & 0.9 & 0.9 \\
$\boldsymbol{\gamma}$ & 2 & \textbf{3} & 4 & 5 & 2 & 3 & 5 & 2 & 3 & 5 & 2 & \textbf{3} & 5 & 2 & 3 & 5 \\
\midrule
Accuracy $\uparrow$ & 72.9 & \textbf{75.0} & 74.3 & 74.2 & 72.0 & 74.4 & 75.4 & 73.1 & 73.8 & 74.6 & 75.3 & \textbf{75.6} & 74.5 & 71.6 & 74.7 & 74.6 \\
Distinct $\uparrow$ & 3.97 & \textbf{4.23} & 4.44 & 4.37 & 4.64 & 4.69 & 4.76 & 4.46 & 4.40 & 4.69 & 4.25 & \textbf{4.41} & 4.54 & 4.18 & 4.36 & 4.45 \\
Utility $\uparrow$ & 4.61 & \textbf{4.90} & 4.81 & 4.29 & 4.30 & 4.13 & 3.97 & 4.53 & 4.37 & 4.17 & 4.72 & \textbf{4.71} & 4.22 & 4.76 & 4.65 & 4.16 \\
\bottomrule
\end{tabular}
\end{sc}
\end{small}
\end{center}
\vskip -0.1in
\end{table}

\section{Additional results}
\label{appendix:additional}

To verify the robustness of our results across different SFT datasets, we replicated the experiments conducted on Alpaca using UltraFeedback. While the lower overall quality of UltraFeedback negatively affects performance, the trends observed in the Alpaca experiments remain preserved. The corresponding results for Short Stories and Small Prompts are provided in Table~\ref{tab:ufsssp}.
For diversity metrics, we report mean values and standard deviations calculated across the prompts. For quality, we first compute the average score per prompt, then report the global mean and the standard deviation of those per-prompt averages.

\begin{table*}[!ht]
\caption{Performance of models across UltraFeedback SFT objectives on Short Stories and Small Prompts. Diversity (D) is measured via Self-BLEU (0–100), where lower scores are better. Quality (Q) is measured via LLM Judge score (0–5), where higher scores are better.}
\label{tab:ufsssp}
\begin{center}
\begin{small}
\begin{sc}
\setlength{\tabcolsep}{1.5pt}
\begin{tabular}{c l c c c c c c c c c c c c}
\toprule
Bench & Method & \multicolumn{2}{c}{Mistral-12B} & \multicolumn{2}{c}{OLMo-2-13B} & \multicolumn{2}{c}{Pythia-12B} & \multicolumn{2}{c}{Llama-3.1-8B} & \multicolumn{2}{c}{Qwen-3-8B} & \multicolumn{2}{c}{Phi-4-14B} \\

& & D$\downarrow$ & Q$\uparrow$ & D$\downarrow$ & Q$\uparrow$ & D$\downarrow$ & Q$\uparrow$ & D$\downarrow$ & Q$\uparrow$ & D$\downarrow$ & Q$\uparrow$ & D$\downarrow$ & Q$\uparrow$ \\

\midrule

\multirow{6}{*}[-2.5ex]{SS} & Base & 11.4$_{\pm 4.7}$ & 3.9$_{\pm 0.5}$ & 12.3$_{\pm 4.6}$ & 3.7$_{\pm 0.7}$ & 9.3$_{\pm 3.1}$ & 2.9$_{\pm 0.6}$ & 11.5$_{\pm 6.1}$ & 3.5$_{\pm 0.7}$ & 24.9$_{\pm 8.9}$ & 2.9$_{\pm 1.0}$ & 13.8$_{\pm 6.8}$ & 3.4$_{\pm 0.9}$ \\[1ex]

& CE & 26.9$_{\pm 9.8}$ & 4.4$_{\pm 0.4}$ & 24.0$_{\pm 8.7}$ & 4.4$_{\pm 0.5}$ & 19.7$_{\pm 7.4}$ & 3.1$_{\pm 0.7}$ & 23.9$_{\pm 8.8}$ & 4.2$_{\pm 0.5}$ & 25.6$_{\pm 12.0}$ & 4.2$_{\pm 0.5}$ & 24.4$_{\pm 7.9}$ & 4.5$_{\pm 0.3}$ \\

\cmidrule(){2-14}

& \textcolor{grey}{$\lambda$-PR} & \textcolor{grey}{3.3$_{\pm 0.5}$} & \textcolor{grey}{1.8$_{\pm 0.5}$} & \textcolor{grey}{3.2$_{\pm 0.4}$} & \textcolor{grey}{2.1$_{\pm 0.5}$} & \textcolor{grey}{3.6$_{\pm 0.7}$} & \textcolor{grey}{1.2$_{\pm 0.5}$} & \textcolor{grey}{3.3$_{\pm 0.5}$} & \textcolor{grey}{1.9$_{\pm 0.5}$} & \textcolor{grey}{4.0$_{\pm 0.8}$} & \textcolor{grey}{2.6$_{\pm 0.5}$} & \textcolor{grey}{3.4$_{\pm 0.5}$} & \textcolor{grey}{2.2$_{\pm 0.5}$} \\[1ex]

& FL & 15.3$_{\pm 4.7}$ & 4.4$_{\pm 0.3}$ & 14.3$_{\pm 4.6}$ & 4.3$_{\pm 0.4}$ & 14.9$_{\pm 4.7}$ & 2.7$_{\pm 0.6}$ & 14.9$_{\pm 5.4}$ & 4.1$_{\pm 0.4}$ & 15.1$_{\pm 5.7}$ & 4.2$_{\pm 0.4}$ & 15.5$_{\pm 5.4}$ & 4.4$_{\pm 0.3}$ \\[1ex]

& GEM & 13.1$_{\pm 5.3}$ & 4.2$_{\pm 0.4}$ & 12.1$_{\pm 4.3}$ & 4.1$_{\pm 0.5}$ & 9.8$_{\pm 3.6}$ & 2.6$_{\pm 0.7}$ & 12.0$_{\pm 4.8}$ & 4.0$_{\pm 0.4}$ & 13.4$_{\pm 6.5}$ & 4.1$_{\pm 0.4}$ & 13.1$_{\pm 5.8}$ & 4.3$_{\pm 0.4}$ \\[1ex]

& TOFU & \textbf{12.1$_{\pm 4.6}$} & 4.2$_{\pm 0.3}$ & \textbf{11.2$_{\pm 3.5}$} & 4.2$_{\pm 0.4}$ & \textbf{9.5$_{\pm 3.0}$} & 2.9$_{\pm 0.6}$ & \textbf{11.4$_{\pm 3.7}$} & 4.0$_{\pm 0.4}$ & \textbf{12.3$_{\pm 4.7}$} & 4.1$_{\pm 0.4}$ & \textbf{11.8$_{\pm 4.3}$} & 4.3$_{\pm 0.4}$ \\

\midrule

\multirow{6}{*}[-2.5ex]{SP} & Base & 12.7$_{\pm 7.0}$ & 3.8$_{\pm 0.9}$ & 13.9$_{\pm 7.2}$ & 3.8$_{\pm 1.0}$ & 8.4$_{\pm 3.0}$ & 2.6$_{\pm 1.0}$ &12.2$_{\pm 6.0}$ & 3.5$_{\pm 0.9}$ & 31.5$_{\pm 12.5}$ & 3.8$_{\pm 0.9}$ & 17.6$_{\pm 9.3}$ & 3.9$_{\pm 0.9}$ \\[1ex]

& CE & 45.1$_{\pm 10.3}$ & 3.8$_{\pm 0.5}$ & 43.9$_{\pm 9.6}$ & 3.8$_{\pm 0.5}$ & 32.0$_{\pm 9.0}$ & 3.2$_{\pm 0.6}$ & 42.8$_{\pm 10.0}$ & 3.8$_{\pm 0.5}$ & 44.3$_{\pm 9.8}$ & 3.7$_{\pm 0.6}$ & 48.0$_{\pm 9.9}$ & 3.8$_{\pm 0.5}$ \\

\cmidrule(){2-14}

& \textcolor{grey}{$\lambda$-PR} & \textcolor{grey}{2.4$_{\pm 0.5}$} & \textcolor{grey}{1.3$_{\pm 0.6}$} & \textcolor{grey}{2.5$_{\pm 0.6}$} & \textcolor{grey}{1.4$_{\pm 0.6}$} & \textcolor{grey}{2.5$_{\pm 0.4}$} & \textcolor{grey}{1.3$_{\pm 0.5}$} & \textcolor{grey}{2.6$_{\pm 0.5}$} & \textcolor{grey}{1.3$_{\pm 0.5}$} & \textcolor{grey}{3.4$_{\pm 0.8}$} & \textcolor{grey}{1.7$_{\pm 0.6}$} & \textcolor{grey}{2.8$_{\pm 0.7}$} & \textcolor{grey}{1.4$_{\pm 0.5}$} \\[1ex]

& FL & 27.8$_{\pm 7.1}$ & 3.6$_{\pm 0.5}$ & 26.8$_{\pm 7.0}$ & 3.6$_{\pm 0.5}$ & 20.6$_{\pm 5.8}$ & 3.0$_{\pm 0.6}$ & 26.0$_{\pm 7.0}$ & 3.6$_{\pm 0.5}$ & 28.3$_{\pm 7.5}$ & 3.5$_{\pm 0.5}$ & 27.9$_{\pm 7.3}$ & 3.6$_{\pm 0.5}$ \\[1ex]

& GEM & 26.3$_{\pm 8.0}$ & 3.6$_{\pm 0.5}$ & 25.1$_{\pm 7.3}$ & 3.5$_{\pm 0.5}$ & 16.7$_{\pm 5.7}$ & 2.8$_{\pm 0.6}$ & 24.2$_{\pm 6.9}$ & 3.5$_{\pm 0.5}$ & 28.4$_{\pm 8.5}$ & 3.5$_{\pm 0.5}$ & 28.1$_{\pm 8.0}$ & 3.6$_{\pm 0.5}$ \\[1ex]

& TOFU & \textbf{20.9$_{\pm 5.7}$} & 3.5$_{\pm 0.6}$ & \textbf{20.2$_{\pm 5.7}$} & 3.5$_{\pm 0.5}$ & \textbf{14.6$_{\pm 4.9}$} & 2.8$_{\pm 0.6}$ & \textbf{19.8$_{\pm 5.7}$} & 3.5$_{\pm 0.5}$ & \textbf{22.5$_{\pm 6.2}$} & 3.5$_{\pm 0.5}$ & \textbf{21.1$_{\pm 6.2}$} & 3.5$_{\pm 0.5}$ \\

\bottomrule
\end{tabular}
\end{sc}
\end{small}
\end{center}
\vskip -0.1in
\end{table*}

While the primary figures for NoveltyBench are presented in the main text, the corresponding raw values are provided in Table~\ref{tab:nb}. We report the mean values and standard deviations calculated across the prompts.
We note that the Utility metric exhibits a high standard deviation, in some cases exceeding half of the mean value. This significant variance may stem from the diverse difficulty levels of the prompts within the benchmark, or potentially from a lack of robustness in the judge used for evaluation.

\begin{table*}[!ht]
\caption{Performance of models across UltraFeedback and Alpaca SFT objectives on NoveltyBench. (D) Distinct (1–10) measures responses diversity, while (U) Utility (1–10) represents quality. For both metrics, higher values indicate superior performance. }
\label{tab:nb}
\begin{center}
\begin{small}
\begin{sc}
\setlength{\tabcolsep}{3pt}
\begin{tabular}{c l c c c c c c c c c c c c}
\toprule
SFT & Method & \multicolumn{2}{c}{Mistral-12B} & \multicolumn{2}{c}{OLMo-2-13B} & \multicolumn{2}{c}{Pythia-12B} & \multicolumn{2}{c}{Llama-3.1-8B} & \multicolumn{2}{c}{Qwen-3-8B} & \multicolumn{2}{c}{Phi-4-14B} \\

& & D$\uparrow$ & U$\uparrow$ & D$\uparrow$ & U$\uparrow$ & D$\uparrow$ & U$\uparrow$ & D$\uparrow$ & U$\uparrow$ & D$\uparrow$ & U$\uparrow$ & D$\uparrow$ & U$\uparrow$ \\

\midrule

\multirow{5}{*}[-2ex]{\rotatebox{90}{Alpaca}} & CE & 6.6$_{\pm 2.7}$ & 4.3$_{\pm 2.4}$ & 6.5$_{\pm 2.8}$ & 4.2$_{\pm 2.5}$ & 7.7$_{\pm 2.3}$ & 3.2$_{\pm 2.2}$ & 6.5$_{\pm 2.7}$ & 4.2$_{\pm 2.4}$ & 6.5$_{\pm 2.7}$ & 4.2$_{\pm 2.4}$ & 5.8$_{\pm 2.8}$ & 1.7$_{\pm 1.2}$ \\[1ex]

& \textcolor{grey}{$\lambda$-PR} & \textcolor{grey}{9.5$_{\pm 1.2}$} & \textcolor{grey}{1.3$_{\pm 0.5}$} & \textcolor{grey}{9.4$_{\pm 0.4}$} & \textcolor{grey}{1.3$_{\pm 0.6}$} & \textcolor{grey}{9.3$_{\pm 1.4}$} & \textcolor{grey}{1.1$_{\pm 0.4}$} & \textcolor{grey}{9.5$_{\pm 1.4}$} & \textcolor{grey}{1.3$_{\pm 0.6}$} & \textcolor{grey}{9.1$_{\pm 1.7}$} & \textcolor{grey}{1.2$_{\pm 0.5}$} & \textcolor{grey}{9.6$_{\pm 1.2}$} & \textcolor{grey}{1.2$_{\pm 0.5}$} \\[1ex]

& FL & 7.4$_{\pm 2.4}$ & 4.3$_{\pm 2.3}$ & 7.7$_{\pm 2.6}$ & 4.5$_{\pm 2.5}$ & 8.5$_{\pm 2.0}$ & 3.2$_{\pm 2.0}$ & 7.7$_{\pm 2.5}$ & 4.2$_{\pm 2.4}$ & 7.4$_{\pm 2.5}$ & 4.3$_{\pm 2.3}$ & 7.1$_{\pm 2.5}$ & 2.0$_{\pm 1.3}$ \\[1ex]

& GEM & 8.0$_{\pm 2.4}$ & 4.5$_{\pm 2.5}$ & 8.1$_{\pm 2.2}$ & 4.6$_{\pm 2.5}$ & 8.7$_{\pm 1.8}$ & 3.0$_{\pm 2.0}$ & 8.1$_{\pm 2.2}$ & 4.7$_{\pm 2.6}$ & 7.6$_{\pm 2.4}$ & 4.3$_{\pm 2.5}$ & 7.2$_{\pm 2.6}$ & 1.8$_{\pm 1.2}$ \\[1ex]

& TOFU & \textbf{8.2$_{\pm 2.2}$} & 4.4$_{\pm 2.4}$ & \textbf{8.3$_{\pm 2.1}$} & 4.4$_{\pm 2.4}$ & \textbf{8.8$_{\pm 1.9}$} & 3.0$_{\pm 2.0}$ & \textbf{8.3$_{\pm 2.1}$} & 4.3$_{\pm 2.4}$ & \textbf{7.9$_{\pm 2.5}$} & 4.2$_{\pm 2.4}$ &\textbf{7.5$_{\pm 2.5}$} & 2.0$_{\pm 1.3}$ \\

\midrule

\multirow{5}{*}[-2ex]{\rotatebox{90}{UF}} & \textbf{CE} & 7.3$_{\pm 2.4}$ & 3.2$_{\pm 2.2}$ & 7.3$_{\pm 2.5}$ & 3.0$_{\pm 2.1}$ & 8.5$_{\pm 1.8}$ & 1.7$_{\pm 1.1}$ & 7.6$_{\pm 2.3}$ & 2.9$_{\pm 1.9}$ & 7.0$_{\pm 2.5}$ & 2.9$_{\pm 2.0}$ & 6.9$_{\pm 2.4}$ & 2.2$_{\pm 1.3}$ \\[1ex]

& \textcolor{grey}{$\lambda$-PR} & \textcolor{grey}{10.0$_{\pm 0.2}$} & \textcolor{grey}{1.1$_{\pm 0.3}$} & \textcolor{grey}{9.9$_{\pm 0.5}$} & \textcolor{grey}{1.1$_{\pm 0.2}$} & \textcolor{grey}{9.9$_{\pm 0.5}$} & \textcolor{grey}{1.0$_{\pm 0.2}$} & \textcolor{grey}{10.0$_{\pm 0.2}$} & \textcolor{grey}{1.1$_{\pm 0.3}$} & \textcolor{grey}{9.9$_{\pm 0.5}$} & \textcolor{grey}{1.1$_{\pm 0.4}$} & \textcolor{grey}{9.9$_{\pm 0.3}$} & \textcolor{grey}{1.1$_{\pm 0.3}$} \\[1ex]

& FL & 8.5$_{\pm 1.9}$ & 2.9$_{\pm 1.9}$ & 8.8$_{\pm 1.5}$ & 3.0$_{\pm 1.9}$ & 9.3$_{\pm 1.1}$ & 1.6$_{\pm 0.8}$ & 8.9$_{\pm 1.6}$ & 2.8$_{\pm 1.8}$ & 8.2$_{\pm 1.6}$ & 2.9$_{\pm 1.9}$ & 8.4$_{\pm 1.8}$ & 2.3$_{\pm 1.2}$ \\[1ex]

& GEM & 9.0$_{\pm 1.6}$ & 2.8$_{\pm 1.8}$ & 9.1$_{\pm 1.3}$ & 2.8$_{\pm 1.9}$ & 9.5$_{\pm 0.8}$ & 1.4$_{\pm 0.7}$ & 9.1$_{\pm 1.2}$ & 2.6$_{\pm 1.6}$ & 8.8$_{\pm 1.8}$ & 2.8$_{\pm 1.9}$ & 8.6$_{\pm 1.6}$ & 2.0$_{\pm 1.2}$ \\[1ex]

& TOFU & \textbf{9.1$_{\pm 1.2}$} & 2.6$_{\pm 1.7}$ & \textbf{9.1$_{\pm 1.5}$} & 2.9$_{\pm 1.9}$ & \textbf{9.5$_{\pm 0.9}$} & 1.4$_{\pm 0.7}$ & \textbf{9.3$_{\pm 1.3}$} & 2.6$_{\pm 1.8}$ & \textbf{8.8$_{\pm 1.5}$} & 2.8$_{\pm 1.9}$ & \textbf{8.9$_{\pm 1.4}$} & 2.0$_{\pm 1.2}$ \\

\bottomrule
\end{tabular}
\end{sc}
\end{small}
\end{center}
\vskip -0.1in
\end{table*}

To explore the potential longer-horizon collapse that may occur in longer generations, we conducted additional experiments on Short Stories and Small Prompts by employing Mistral-12B, OLMo-2-13B, and Llama-3.1-8B with a limit of 256 and 512 tokens (see Table \ref{tab:256} and Table \ref{tab:512}). Our results are in line with Table \ref{tab:alpacasssp} showing that models fine-tuned with TOFU consistently outperform other approaches across all tested lengths.

We additionally performed SFT on a non-quantized Llama-3.1-8B and evaluated the model's performance on Short Stories and Small Prompts to ensure that employed quantization does not affect the consistency of our results. Then we compared a quantized model against non-quantized across different SFT objectives and gathered the results in Table \ref{tab:quantvsno}. Ultimately, we observe the same pattern as in the main results across all benchmarks --- models fine-tuned with TOFU consistently outperform other SFT approaches.

As a qualitative illustration of induced diversity in creative writing, we compare the outputs of Mistral-12B fine-tuned on Alpaca using Cross-Entropy versus TOFU loss, the results are gathered in Figure~\ref{fig:jokes}. When prompted to tell a funny joke, the model trained with CE tends toward redundancy, often generating very similar or nearly identical responses. In contrast, the TOFU-tuned version maintains significantly more variety between its completions. 

\begin{table*}[t]
\caption{Performance of quantized SFT models vs non-quantized across Alpaca SFT objectives on Short Stories and Small Prompts.} 
\label{tab:quantvsno}
\begin{center}
\begin{small}
\begin{sc}
\setlength{\tabcolsep}{3pt}
\begin{tabular}{clcccc}
\toprule
Bench & Method & \multicolumn{2}{c}{Llama-3.1-8B} & \multicolumn{2}{c}{Llama-3.1-8B-4bit} \\
& & D$\downarrow$ & Q$\uparrow$ & D$\downarrow$ & Q$\uparrow$ \\
\midrule
\multirow{4}{*}[-1.5ex]{SS} & CE & 23.2$_{\pm 8.2}$ & 4.8$_{\pm 0.2}$ & 22.6$_{\pm 8.5}$ & 4.7$_{\pm 0.3}$ \\[1ex]
& FL & 15.5$_{\pm 5.0}$ & 4.6$_{\pm 0.3}$ & 14.3$_{\pm 4.7}$ & 4.5$_{\pm 0.3}$ \\[1ex]
& GEM & 12.0$_{\pm 4.4}$ & 4.5$_{\pm 0.3}$ & 11.5$_{\pm 4.0}$ & 4.5$_{\pm 0.2}$ \\[1ex]
& TOFU & \textbf{11.5$_{\pm 4.3}$} & 4.5$_{\pm 0.3}$ & \textbf{11.2$_{\pm 4.2}$} & 4.5$_{\pm 0.3}$ \\
\midrule
\multirow{4}{*}[-1.5ex]{SP} & CE & 52.7$_{\pm 15.8}$ & 4.2$_{\pm 0.7}$ & 44.5$_{\pm 14.3}$ & 4.2$_{\pm 0.7}$ \\[1ex]
& FL & 32.8$_{\pm 13.1}$ & 4.2$_{\pm 0.7}$ & 28.3$_{\pm 8.8}$ & 4.1$_{\pm 0.7}$ \\[1ex]
& GEM & 33.6$_{\pm 14.7}$ & 4.1$_{\pm 0.7}$ & 25.9$_{\pm 12.3}$ & 4.1$_{\pm 0.7}$ \\[1ex]
& TOFU & \textbf{26.2$_{\pm 11.2}$} & 4.1$_{\pm 0.6}$ & \textbf{20.8$_{\pm 8.4}$} & 4.0$_{\pm 0.7}$ \\
\bottomrule
\end{tabular}
\end{sc}
\end{small}
\end{center}
\vskip -0.1in
\end{table*}

Additionally, we employ mechanistic validation, demonstrating TOFU's successful impact on simultaneously reducing ignorance and forgetting (see Figure \ref{fig:mecha}). Specifically, we ask the model to generate an integer between 1 and 5 and track the output probabilities of the top-ranked tokens. The critical comparison is between the resulting distributions of CE and TOFU, where TOFU improves the chances of correct answers without suppressing legitimate generative breadth.

\begin{table*}[t]
\caption{The performance of Alpaca SFT and UltraFeedback SFT models across different objectives on ARC (first column) and MMLU (second column). Values are measured as Accuracy scores (0-100).}
\label{tab:fact}
\begin{center}
\begin{small}
\begin{sc}
\begin{tabular}{c l c c c c c c c c c c c c}
\toprule
Bench & Method & \multicolumn{2}{c}{Mistral-12B} & \multicolumn{2}{c}{OLMo-2-13B} & \multicolumn{2}{c}{Pythia-12B} & \multicolumn{2}{c}{Llama-3.1-8B} & \multicolumn{2}{c}{Qwen-3-8B} & \multicolumn{2}{c}{Phi-4-14B} \\
 & & arc & mmlu & arc & mmlu & arc & mmlu & arc & mmlu & arc & mmlu & arc & mmlu \\
\midrule
\multirow{5}{*}[-1.5ex]{\rotatebox{90}{Alpaca}} &
CE   & 74.6 & 58.3 & 73.5 & 58.9 & 25.5 & 25.9 & 71.4 & 57.8 & 86.9 & 69.9 & 86.1 & 72.0 \\[1ex]
& FL  & 76.8 & 59.4 & 74.0 & 58.1 & 25.1 & 26.1 & 71.2 & 58.0 & 86.9 & 69.4 & 85.7 & 71.0\\[1ex]
& GEM & 76.8 & 59.4 & 74.2 & 58.6 & 23.5 & 25.2 & 72.1 & 58.3 & 87.0 & 69.7 & 86.2 & 71.9 \\[1ex]
& $\lambda$-PR  & 61.6 & 50.4 & 62.4 & 39.2 & 21.3 & 18.3 & 55.4 & 47.6 & 86.1 & 68.2 & 86.7 & 71.7 \\[1ex]
& TOFU & 73.9 & 57.8 & 74.6 & 58.7 & 24.9 & 26.0 & 72.5 & 57.3 & 86.9 & 69.2 & 85.8 & 71.9 \\
\midrule
\multirow{5}{*}[-1.5ex]{\rotatebox{90}{UF}} &
CE   & 76.3 & 59.8 & 74.4 & 59.8 & 23.6 & 21.8 & 71.8 & 58.3 & 88.1 & 70.5 & 88.7 & 74.8 \\[1ex]
& FL  & 74.3 & 58.7 & 73.7 & 59.3 & 25.2 & 22.7 & 71.1 & 58.5 & 88.3 & 70.3 & 89.0 & 75.2 \\[1ex]
& GEM & 77.3 & 59.8 & 74.1 & 59.6 & 27.1 & 21.5 & 71.8 & 58.6 & 87.9 & 70.2 & 88.9 & 74.9 \\[1ex]
& $\lambda$-PR  & 37.7 & 32.1 & 14.4 & 0.3 & 20.1 & 13.4 & 48.4 & 38.2 & 78.8 & 56.3 & 77.2 & 62.4 \\[1ex]
& TOFU & 76.1 & 58.8 & 74.1 & 59.4 & 25.3 & 24.6 & 72.3 & 58.2 & 88.2 & 70.3 & 88.7 & 74.9 \\
\bottomrule
\end{tabular}
\end{sc}
\end{small}
\end{center}
\vskip -0.1in
\end{table*}

\begin{table*}[t]
\centering
\small
\caption{SFT models across different SFT objectives on Malicious Instruct (MI) and HarmBench (HB) datasets. Safety score is measured via Attack Success Rate (0-100), where lower scores are better.}
\setlength{\tabcolsep}{3pt}
\label{tab:safety}
\begin{center}
\begin{small}
\begin{sc}
\begin{tabular}{clcccccc}
\toprule
Bench & Method & Mistral-12B & OLMo-2-13B & Pythia-12B & Llama-3.1-8B & Qwen-3-8B & Phi-4-14B \\
\midrule
\multirow{4}{*}[-1.5ex]{MI} &
CE & 86.0 & 96.8 & 98.1 & 94.7 & 54.4 & 49.3 \\[1ex]
& FL & 88.4 & 95.3 & 97.1 & 94.0 & 54.3 & 52.8 \\[1ex]
& GEM & 90.0 & 96.6 & 97.2 & 95.2 & 61.9 & 56.7 \\[1ex]
& TOFU & 87.2 & 94.9 & 97.0 & 95.3 & 56.8 & 52.4 \\
\midrule
\multirow{4}{*}[-1.5ex]{HB} &
CE & 93.2 & 96.1 & 93.8 & 95.4 & 82.2 & 79.4 \\[1ex]
& FL & 92.6 & 93.8 & 91.9 & 91.3 & 79.2 & 81.2 \\[1ex]
& GEM & 93.9 & 94.8 & 91.4 & 93.5 & 83.4 & 82.2 \\[1ex]
& TOFU & 92.6 & 91.8 & 91.3 & 93.0 & 79.5 & 81.7 \\
\bottomrule
\end{tabular}
\end{sc}
\end{small}
\end{center}
\vskip -0.1in
\end{table*}

\begin{table*}[t]
\caption{Performance of models across Alpaca SFT objectives on Short Stories and Small Prompts with extended generation length to 256 tokens. Diversity (D) is measured via Self-BLEU (0–100), where lower scores are better. Quality (Q) is measured via LLM Judge score (0–5), where higher scores are better.}
\label{tab:256}
\begin{center}
\begin{small}
\begin{sc}
\begin{tabular}{clcccccc}
\toprule
Bench & Method & \multicolumn{2}{c}{Mistral-12B} & \multicolumn{2}{c}{OLMo-2-13B} & \multicolumn{2}{c}{Llama-3.1-8B} \\

& & D$\downarrow$ & Q$\uparrow$ & D$\downarrow$ & Q$\uparrow$ & D$\downarrow$ & Q$\uparrow$ \\

\midrule

\multirow{4}{*}[-1.5ex]{SS} &
CE & 25.1$_{\pm 8.5}$ & 5.0$_{\pm 0.1}$ & 24.5$_{\pm 9.8}$ & 5.0$_{\pm 0.1}$ & 22.6$_{\pm 8.1}$ & 4.9$_{\pm 0.1}$ \\[1ex]

& FL & 15.7$_{\pm 4.9}$ & 5.0$_{\pm 0.1}$ & 16.1$_{\pm 5.7}$ & 4.9$_{\pm 0.1}$ & 14.4$_{\pm 5.3}$ & 4.9$_{\pm 0.2}$ \\[1ex]

& GEM & 14.4$_{\pm 5.6}$ & 5.0$_{\pm 0.1}$ & 13.4$_{\pm 5.0}$ & 5.0$_{\pm 0.1}$ & 11.6$_{\pm 4.0}$ & 4.9$_{\pm 0.1}$ \\[1ex]

& TOFU & \textbf{12.5$_{\pm 4.5}$} & 5.0$_{\pm 0.1}$ & \textbf{12.3$_{\pm 4.5}$} & 4.9$_{\pm 0.1}$ & \textbf{11.4$_{\pm 3.6}$} & 4.9$_{\pm 0.1}$ \\

\midrule

\multirow{4}{*}[-1.5ex]{SP} & CE & 50.9$_{\pm 15.4}$ & 4.9$_{\pm 0.3}$ & 51.6$_{\pm 15.9}$ & 4.9$_{\pm 0.2}$ & 50.3$_{\pm 15.2}$ & 4.9$_{\pm 0.3}$ \\[1ex]

& FL & 33.7$_{\pm 12.2}$ & 4.8$_{\pm 0.3}$ & 34.3$_{\pm 12.2}$ & 4.9$_{\pm 0.2}$ & 33.2$_{\pm 10.6}$ & 4.8$_{\pm 0.3}$ \\[1ex]

& GEM & 32.9$_{\pm 14.2}$ & 4.9$_{\pm 0.3}$ & 32.0$_{\pm 13.5}$ & 4.8$_{\pm 0.2}$ & 31.8$_{\pm 13.6}$ & 4.8$_{\pm 0.3}$ \\[1ex]

& TOFU & \textbf{26.8$_{\pm 11.5}$} & 4.8$_{\pm 0.3}$ & \textbf{26.1$_{\pm 10.8}$} & 4.8$_{\pm 0.3}$ & \textbf{25.6$_{\pm 10.9}$} & 4.8$_{\pm 0.3}$ \\

\bottomrule
\end{tabular}
\end{sc}
\end{small}
\end{center}
\vskip -0.1in
\end{table*}

\begin{table*}[b]
\caption{Performance of models across Alpaca SFT objectives on Short Stories and Small Prompts with extended generation length to 512 tokens. Diversity (D) is measured via Self-BLEU (0–100), where lower scores are better. Quality (Q) is measured via LLM Judge score (0–5), where higher scores are better.} 
\label{tab:512}
\begin{center}
\begin{small}
\begin{sc}
\begin{tabular}{clcccccc}
\toprule
Bench & Method & \multicolumn{2}{c}{Mistral-12B} & \multicolumn{2}{c}{OLMo-2-13B} & \multicolumn{2}{c}{Llama-3.1-8B} \\
& & D$\downarrow$ & Q$\uparrow$ & D$\downarrow$ & Q$\uparrow$ & D$\downarrow$ & Q$\uparrow$ \\
\midrule
\multirow{4}{*}[-1.5ex]{SS} & CE & 23.1$_{\pm 8.1}$ & 5.0$_{\pm 0.1}$ & 23.6$_{\pm 9.4}$ & 5.0$_{\pm 0.1}$ & 21.6$_{\pm 7.6}$ & 4.9$_{\pm 0.1}$ \\[1ex]

& FL & 15.8$_{\pm 5.0}$ & 5.0$_{\pm 0.1}$ & 15.7$_{\pm 5.7}$ & 5.0$_{\pm 0.1}$ & 14.8$_{\pm 4.7}$ & 4.9$_{\pm 0.1}$ \\[1ex]

& GEM & 14.0$_{\pm 5.4}$ & 5.0$_{\pm 0.1}$ & 13.0$_{\pm 4.6}$ & 5.0$_{\pm 0.1}$ & 11.8$_{\pm 4.2}$ & 4.9$_{\pm 0.1}$ \\[1ex]

& TOFU & \textbf{13.0$_{\pm 4.7}$} & 5.0$_{\pm 0.1}$ & \textbf{11.6$_{\pm 3.4}$} & 4.9$_{\pm 0.1}$ & \textbf{11.6$_{\pm 4.1}$} & 4.9$_{\pm 0.1}$ \\

\midrule

\multirow{4}{*}[-1.5ex]{SP} & CE & 50.9$_{\pm 15.3}$ & 4.9$_{\pm 0.3}$ & 51.4$_{\pm 16.2}$ & 4.9$_{\pm 0.2}$ & 50.5$_{\pm 15.5}$ & 4.9$_{\pm 0.3}$ \\[1ex]

& FL & 33.4$_{\pm 12.2}$ & 4.8$_{\pm 0.2}$ & 34.3$_{\pm 12.5}$ & 4.9$_{\pm 0.2}$ & 31.2$_{\pm 10.8}$ & 4.9$_{\pm 0.2}$ \\[1ex]

& GEM & 32.9$_{\pm 14.2}$ & 4.9$_{\pm 0.2}$ & 33.4$_{\pm 13.9}$ & 4.9$_{\pm 0.2}$ & 32.8$_{\pm 13.8}$ & 4.8$_{\pm 0.3}$ \\[1ex]

& TOFU & \textbf{26.6$_{\pm 11.5}$} & 4.8$_{\pm 0.3}$ & \textbf{26.3$_{\pm 10.8}$} & 4.9$_{\pm 0.2}$ & \textbf{25.8$_{\pm 10.3}$} & 4.8$_{\pm 0.3}$ \\

\bottomrule
\end{tabular}
\end{sc}
\end{small}
\end{center}
\vskip -0.1in
\end{table*}

\begin{figure}[t]
\vskip 0.2in
\begin{center}
\centerline{\includegraphics[width=\linewidth]{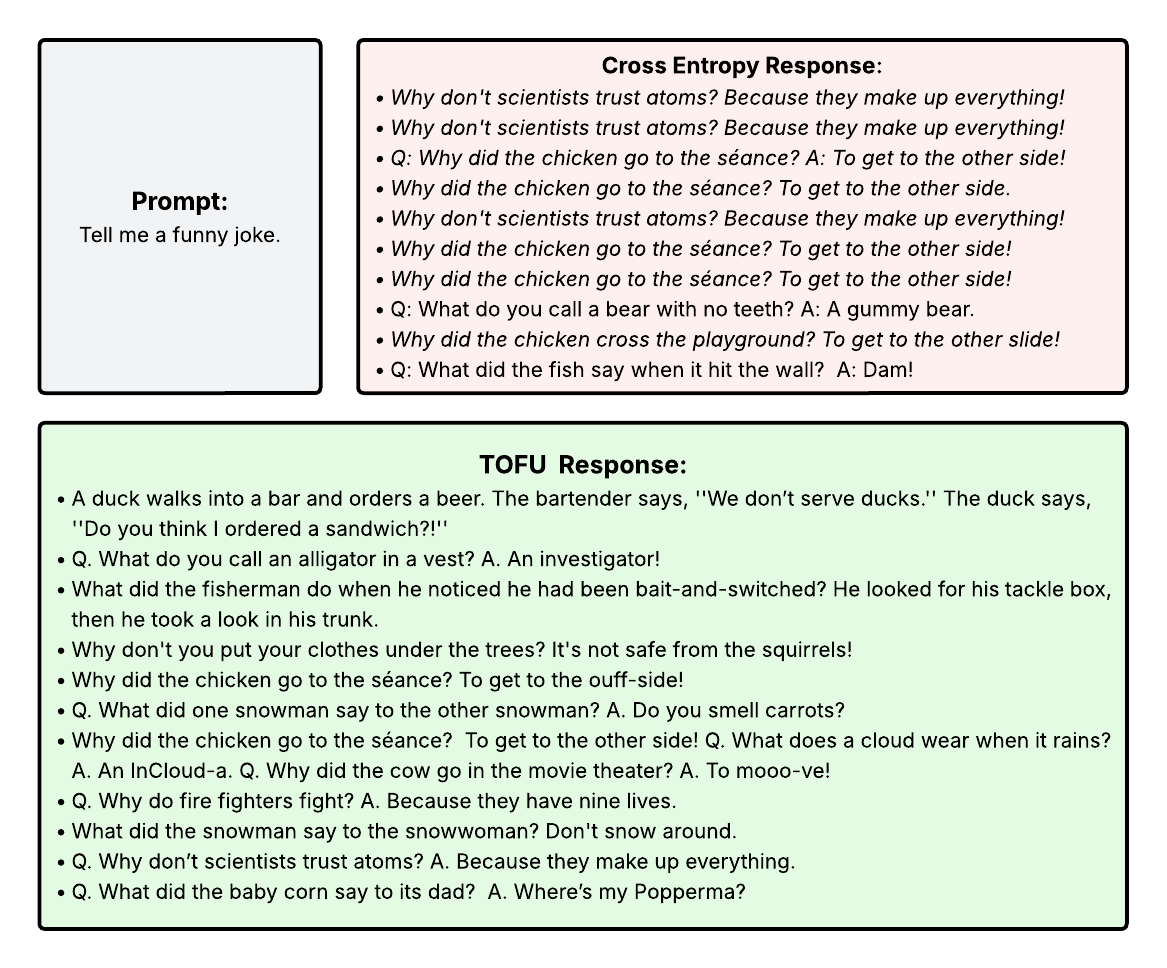}}
\caption{Example illustrating differences between different loss functions used to tune Mistral-12B.}
\label{fig:jokes}
\end{center}
\vskip -0.2in
\end{figure}

\begin{figure*}[ht]
  \vskip 0.2in
  \begin{center}
    \centerline{\includegraphics[width=\linewidth]{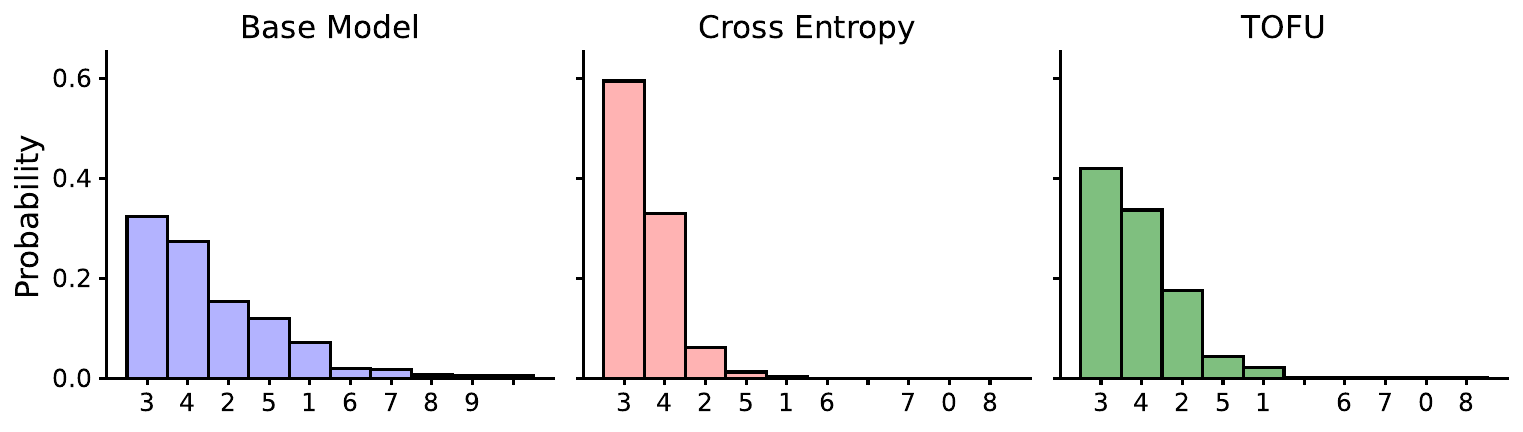}}
    \caption{
    Predicted probability distribution for the first generated token (ignoring spaces) following the prompt "Generate an integer between 1 and 5". The comparison across base model, SFT with CE, and SFT with TOFU demonstrates that TOFU prevents knowledge forgetting while simultaneously reducing the probability mass of the tail for incorrect tokens.
    }
    \label{fig:mecha}
  \end{center}
\end{figure*}

\end{document}